\algnewcommand{\Inputs}[1]{%
  \State \textbf{Inputs:}
  \Statex \hspace*{\algorithmicindent}\parbox[t]{.8\linewidth}{\raggedright #1}
}
\algnewcommand{\Initialize}[1]{%
  \State \textbf{Initialize:}
  \Statex \hspace*{\algorithmicindent}\parbox[t]{.8\linewidth}{\raggedright #1}
}
\algnewcommand{\Outputs}[1]{%
  \State \textbf{Outputs:}
  \Statex \hspace*{\algorithmicindent}\parbox[t]{.8\linewidth}{\raggedright #1}
}
\crefname{theorem}{theorem}{Theorems}
\Crefname{Theorem}{Theorem}{Theorems}
\newtheorem*{lemma_nonumber*}{Lemma}
\newaliascnt{lemma}{theorem}
\crefname{lemma}{lemma}{lemmas}
\Crefname{Lemma}{Lemma}{Lemmas}
\newaliascnt{corollary}{theorem}
\crefname{corollary}{corollary}{corollaries}
\Crefname{Corollary}{Corollary}{Corollaries}
\newaliascnt{proposition}{theorem}
\newtheorem{proposition}[proposition]{Proposition}
\crefname{proposition}{proposition}{propositions}
\Crefname{Proposition}{Proposition}{Propositions}
\newaliascnt{definition}{theorem}
\crefname{definition}{definition}{definitions}
\Crefname{Definition}{Definition}{Definitions}
\newaliascnt{remark}{theorem}
\crefname{remark}{remark}{remarks}
\Crefname{Remark}{Remark}{Remarks}
\crefname{example}{example}{examples}
\Crefname{Example}{Example}{Examples}
\crefname{figure}{figure}{figures}
\Crefname{Figure}{Figure}{Figures}
\newtheorem{assumptionF}{\textbf{F}\hspace{-3pt}}
\Crefname{assumptionB}{\textbf{B}\hspace{-3pt}}{\textbf{B}\hspace{-3pt}}
\crefname{assumptionB}{\textbf{B}}{\textbf{B}}
\Crefname{assumptionC}{\textbf{C}\hspace{-3pt}}{\textbf{C}\hspace{-3pt}}
\crefname{assumptionC}{\textbf{C}}{\textbf{C}}
\Crefname{assumptionH}{\textbf{H}\hspace{-3pt}}{\textbf{H}\hspace{-3pt}}
\crefname{assumptionH}{\textbf{H}}{\textbf{H}}
\Crefname{assumptionT}{\textbf{T}\hspace{-3pt}}{\textbf{T}\hspace{-3pt}}
\crefname{assumptionT}{\textbf{T}}{\textbf{T}}
\Crefname{assumptionT}{\textbf{T}\hspace{-3pt}}{\textbf{T}\hspace{-3pt}}
\crefname{assumptionT}{\textbf{T}}{\textbf{T}}
\Crefname{assumptionL}{\textbf{L}\hspace{-3pt}}{\textbf{L}\hspace{-3pt}}
\crefname{assumptionL}{\textbf{L}}{\textbf{L}}
\Crefname{assumptionQ}{\textbf{Q}\hspace{-3pt}}{\textbf{Q}\hspace{-3pt}}
\crefname{assumptionQ}{\textbf{Q}}{\textbf{Q}}
\Crefname{assumptionAR}{\textbf{AR}\hspace{-3pt}}{\textbf{AR}\hspace{-3pt}}
\crefname{assumptionAR}{\textbf{AR}}{\textbf{AR}}
\DeclareMathOperator{\E}{\mathbb{E}}
\title{Online Variational Filtering and Parameter Learning}
\newcommand{\Affiliation}{%
\end{tabular}\\\begin{tabular}[t]{c}\ignorespaces%
}
\author{
Andrew Campbell \thanks{Equal contribution} \And Yuyang Shi \footnotemark[1] \And Tom Rainforth \And Arnaud Doucet \Affiliation
   Department of Statistics, University of Oxford, UK \Affiliation
   \texttt{\{campbell, yshi, rainforth, doucet\}@stats.ox.ac.uk}
}
\begin{document}

\doparttoc
\faketableofcontents

\maketitle

\begin{abstract}
 We present a variational method for \emph{online} state estimation and parameter learning in state-space models (SSMs), a ubiquitous class of latent variable models for sequential data. As per standard batch variational techniques, we use stochastic gradients to simultaneously optimize a lower bound on the log evidence with respect to both model parameters and a variational approximation of the states' posterior distribution.
 However, unlike existing approaches, our method is able to operate in an entirely online manner, such that historic observations do not require revisitation after being incorporated 
 and the cost of updates at each time step remains constant, despite the growing dimensionality of the joint posterior distribution of the states.
 This is achieved by utilizing backward decompositions of this joint posterior distribution and of its variational approximation, combined with Bellman-type recursions for the evidence lower bound and its gradients.
 We demonstrate the performance of this methodology across several examples, including high-dimensional SSMs and sequential Variational Auto-Encoders.
\end{abstract}

\section{Introduction}
Many tasks in machine learning with time series data---such as video prediction \citep{yingzhen2018disentangled,gregor2018temporal}, speech enhancement \citep{richter2020speech}  or robot localization \citep{corenflos2021differentiable,jonschkowski2018differentiable,ma2020discriminative}---often need to be performed online. Online techniques are also necessary in contexts as diverse as target tracking \citep{beard2020solution}, weather prediction \citep{evensen2009data} and financial forecasting \citep{tsay2005analysis}. A popular class of models for these sequential data are SSMs which, when combined with neural network ideas, can also be used to define powerful sequential Variational Auto-Encoders (VAEs); see e.g. \cite{chung2015recurrent,fraccaro2016sequential,gregor2018temporal,marino2018general}. However, performing inference in SSMs is a challenging problem and approximate inference techniques for such models remain an active research area. 
 
Formally, an SSM is described by a latent Markov process and an observation process. Even if the model parameters are assumed known, online inference of the states of the latent process is a complex problem known as filtering. Standard approximations such as the extended Kalman Filter (KF), ensemble KF, and unscented KF can be used, but only provide an ad hoc Gaussian approximation to the filter \citep{evensen2009data,ito2000gaussian,sarkka2013bayesian}. More generally, assumed density filtering techniques \citep{azimi2005approximate,brigo1998differential,minka2001expectation} can provide other simple parametric family approximations.
These approximate filtering methods can be used, in turn, to develop online parameter learning procedures by either augmenting the state with the static parameters or using gradient-based approaches.
However, such approaches are notoriously unreliable \cite{kantas2015particle}. 
Particle Filtering (PF) methods, on the other hand, provide a more principled approach for online state and parameter estimation with theoretical guarantees~\citep{delmoral2004,douc2014nonlinear,kantas2015particle,tadic2020asymptotic},
but the variance of PF estimates typically scales exponentially with the state dimension~\citep{bengtsson2008curse}. 
 
Although they typically do not return consistent estimates, variational techniques provide an attractive alternative for simultaneous state estimation and parameter learning which scales better to high-dimensional latent states than PF, and are not restricted to simple parametric approximations. Many such methods have been proposed for SSMs over recent years, e.g. \cite{archer2015black,krishnan2017structured,maddison2017filtering,le2017auto,naesseth2017variational,kim2020variational,courts2020variational,ryderneural2021}. However, they have generally been developed for \emph{batch inference} where one maximizes the Evidence Lower Bound (ELBO) for a \emph{fixed} dataset.
 As such, they are ill-suited for \emph{online} learning as, whenever a new observation is collected, one would need to update the entire joint variational states distribution whose dimension increases over time.
 Though a small number of online variational approaches have been developed~\cite{smidl2008variational,marino2018general,zhao2020variational}, these rely on significant restrictions of the variational family, leading to approximations that cannot faithfully approximate the posterior distribution of the latent states.

 The main contribution of this paper is a novel variational approach to perform \emph{online} filtering and parameter learning for SSMs which bypasses those restrictions. As per standard batch variational inference, we simultaneously maximize an ELBO with respect to both model parameters and a variational approximation of the joint state posterior. However, our method operates in an entirely online manner and the cost of updates at each time step remains constant. Key to our approach is a backward decomposition of the variational approximation of the states posterior, combined with a representation of the ELBO and its gradients as expectations of value functions satisfying Bellman-type recursions akin to those appearing in Reinforcement Learning (RL) \citep{sutton2018reinforcement}.

\section{Background}
\label{sec:SSMandVI}
\paragraph{State-Space Models.}
SSMs are defined by a latent $\mathcal{X}$-valued Markov process $(x_t)_{t\geq1}$ and $\mathcal{Y}$-valued observations $(y_t)_{t\geq1}$, which are conditionally independent given the latent process. They correspond to the generative model
\begin{equation}\label{eq:SSMmmodel}
    x_1\sim \mu_{\theta}(x_1),\quad\quad x_{t+1}|x_t\sim f_{\theta}(x_{t+1}|x_t),\quad\quad y_t|x_t \sim g_{\theta}(y_t|x_t),
\end{equation}
where $\theta \in \mathbb{R}^{d_\theta}$ is a parameter of interest and we consider here $\mathcal{X}=\mathbb{R}^{d_x}$. For $y^t:=y_{1:t}=(y_1,...,y_t)$, we thus have  
\begin{equation}\label{eq:jointxy}
p_\theta(x_{1:t},y^t)=\mu_\theta(x_1)g_{\theta}(y_1|x_1)\prod\nolimits_{k=2}^t f_\theta(x_k|x_{k-1})g_\theta(y_k|x_k).
\end{equation}
Assume for the time being that $\theta$ is known. Given observations $(y_t)_{t\geq1}$ and parameter values $\theta$, one can perform online state inference by computing the posterior of $x_t$ given $y^t$ which satisfies
\begin{align}\label{eq:Bayesfilter}
  p_{\theta}(x_t|y^{t})=\frac{g_{\theta}(y_t|x_t)p_{\theta}(x_t|y^{t-1})}{p_{\theta}(y_t|y^{t-1})},              ~~~p_{\theta}(x_{t}|y^{t-1})=\!\int\! f_{\theta}(x_t|x_{t-1}) p_{\theta}(x_{t-1}|y^{t-1})\textrm{d}x_{t-1},
\end{align}
with $p_{\theta}(x_1|y^0):=\mu_{\theta}(x_1)$.
The log evidence $\ell_t(\theta):=\log p_{\theta}(y^t)$ is then given by
\begin{equation}\label{eq:logevidence}
    \ell_t(\theta)=\textstyle{\sum_{k=1}^t} \log  p_{\theta}(y_k|y^{k-1}),\quad \text{where}\quad p_{\theta}(y_k|y^{k-1})=\int g_{\theta}(y_k|x_k) p_\theta(x_k|y^{k-1})\mathrm{d}x_{k}.
\end{equation}
Here $p_{\theta}(x_k|y^k)$ is known as the filtering distribution. While the recursion (\ref{eq:Bayesfilter}) and the sequential decomposition (\ref{eq:logevidence}) are at the core of most existing online state and parameter inference techniques \citep{sarkka2013bayesian,kantas2015particle,zhao2020variational}, we will focus here on the joint state posterior distribution, also known as the smoothing distribution, of the states $x_{1:t}$ given $y^t$ and the corresponding representation of the log evidence 
\begin{equation}
    p_{\theta}(x_{1:t}|y^t)=p_\theta(x_{1:t},y^t)/p_{\theta}(y^t),\quad\quad \ell_t(\theta)= \log  p_{\theta}(y^t)= \log \left(\textstyle{\int} p_\theta(x_{1:t},y^t) \mathrm{d}x_{1:t}\right).
\end{equation}
\paragraph{Variational Inference.}
For general SSMs, the filtering and smoothing distributions as well as the log evidence are not available analytically and need to be approximated. For data $y^t$, standard variational approaches use stochastic gradient techniques to maximize the following ELBO
\begin{align}\label{ELBOt}
    \mathcal{L}_{t}(\theta,\phi):=\mathbb{E}_{q^{\phi}(x_{1:t}|y^t)}\left[\log \left(p_{\theta}(x_{1:t},y^t)\big/ q^{\phi}(x_{1:t}|y^t)\right)\right]\leq \ell_t(\theta).
\end{align}
Maximizing this ELBO w.r.t. the parameter $\phi$ of the variational distribution $q^{\phi}(x_{1:t}|y^t)$  corresponds to doing variational smoothing while maximizing it w.r.t. $\theta$ corresponds to doing parameter learning.

As the true smoothing distribution satisfies $p_\theta(x_{1:t}|y^t)=p_\theta(x_1|y^t)\prod_{k=2}^t p_\theta(x_{k}|y^t,x_{k-1})$, one typically uses  $q^{\phi}(x_{1:t}|y^t)=q^{\phi}(x_1|y^t)\prod_{k=2}^tq^{\phi}(x_{k}|y^t,x_{k-1})$ for the variational smoothing distribution; see e.g.  \citep{archer2015black,krishnan2017structured,kim2020variational,weber2015reinforced}. However this approach is not suitable for \emph{online} variational filtering and parameter learning. Firstly, the resulting marginal $q^{\phi}(x_{t}|y^t)$ of $q^{\phi}(x_{1:t}|y^t)$, approximating $p_\theta(x_t|y^t)$, is typically not available analytically. Secondly, when the new observation $y_{t+1}$ is collected, this approach would require recomputing an entirely new variational smoothing distribution with a dimension that increases with time. 
One can attempt to bypass this problem by restricting ourselves to $q^{\phi}(x_{1:t}|y^t)=q^{\phi}(x_1|y^1)\prod_{k=2}^t q^{\phi}(x_{k}|y^k,x_{k-1})$ as per~\cite{marino2018general}.
However, the switch from conditioning on $y^t$ to $y^k$ prohibits learning an accurate approximation of $p_\theta(x_{1:t}|y^t)$ as this formulation does not condition on all relevant data. 

\section{Online Variational Filtering and Parameter Learning}
\label{sec:Online Variational Inference using Reinforcement Learning}
Our online variational filtering and parameter estimation approach exploits a backward factorization of the variational smoothing distribution $q^\phi(x_{1:t}|y^t)$. The ELBO and its gradients w.r.t. $\theta$ and $\phi$ are computed in an \emph{online} manner as $t$ increases by using a combination of ideas from dynamic programming and RL, with a computational time that remains constant at each time step. 
To simplify notation, henceforth  we will write $q^{\phi}_t(x_{1:t})=q^\phi(x_{1:t}|y^t)$.

\subsection{Backward Decomposition of the Variational Smoothing Distribution}
\label{subsec: Backward Decomposition of the Variational Smoothing Distribution}
The key property that we will be exploiting is that $p_{\theta}(x_{1:t}|y^t)$ satisfies 
\begin{equation}\label{eq:backwarddecompojoint}
    p_{\theta}(x_{1:t}|y^t)=p_\theta(x_t|y^t) \prod_{k=1}^{t-1} p_\theta(x_k|y^k,x_{k+1}),\quad p_\theta(x_k|y^k,x_{k+1})=\frac{f_\theta(x_{k+1}|x_k)p_{\theta}(x_k|y^k)}{p_{\theta}(x_{k+1}|y^k)}.
\end{equation}
Equation (\ref{eq:backwarddecompojoint}) shows that, conditional upon $y^t$, $(x_k)_{k=1}^t$ is a reverse-time Markov chain of initial distribution $p_\theta(x_t|y^t)$ and backward Markov transition kernels $p_\theta(x_k|y^k,x_{k+1})$; see e.g. \cite{douc2014nonlinear,kantas2015particle}. Crucially the backward transition kernel at time $k$ depends only on the observations until time $k$.

To exploit this, we consider a variational smoothing distribution of the form
\begin{equation}\label{eq:variationaldistribution}
    q^\phi_t(x_{1:t}) = q^\phi_t(x_t) \prod\nolimits_{k=1}^{t-1} q^{\phi}_{k+1}(x_{k}|x_{k+1}),
\end{equation}
where $q^\phi_t(x_t)$ and $q^{\phi}_{k+1}(x_{k}|x_{k+1})$
are variational approximations of the filtering distribution $p_\theta(x_t|y^t)$ and the backward kernel $p_\theta(x_k|y^{k},x_{k+1})$ respectively. 
Using~\eqref{eq:variationaldistribution}, one now has 
\begin{align}
    \mathcal{L}_{t}(\theta,\phi)=&\,\ell_t(\theta) -\textup{KL}(q^\phi_t(x_t)||p_\theta(x_t|y^t))\\
    &-\sum\nolimits_{k=1}^{t-1} \E_{q^\phi_t(x_{k+1})} \left[  \textup{KL}(q^{\phi}_{k+1}(x_{k}|x_{k+1})||p_{\theta}(x_{k}|y^k,x_{k+1})) \right],
\end{align}
where $\textup{KL}$ is the Kullback--Leibler divergence and $q^{\phi}_t(x_{k+1})$ is the marginal distribution of $x_{k+1}$ under  the variational distribution $q^\phi_t(x_{1:t})$ defined in (\ref{eq:variationaldistribution}). Considering this variational distribution thus makes it possible to learn an arbitrarily accurate variational approximation of the true smoothing distribution whilst still only needing to condition on $y^k$ at time $k$ and not on future observations. Additionally, it follows directly from (\ref{eq:variationaldistribution}) that we can easily update $q^{\phi}_{t+1}(x_{1:t+1})$ from $q^{\phi}_{t}(x_{1:t})$ using
\begin{equation}\label{eq:recursionq_t}
   q^\phi_{t+1}(x_{1:t+1})=q^\phi_{t}(x_{1:t})m_{t+1}^{\phi}(x_{t+1}|x_t), \text{~~for~~} m_{t+1}^{\phi}(x_{t+1}|x_t):=\frac{q_{t+1}^{\phi}(x_{t}|x_{t+1})q_{t+1}^{\phi}(x_{t+1})}{q_{t}^{\phi}(x_t)}. 
\end{equation}
Here $m_{t+1}^{\phi}(x_{t+1}|x_t)$ can be viewed as an approximation of the Markov transition density $q^{\phi}_{t+1}(x_{t+1}|x_{t}) \propto q^{\phi}_{t+1}(x_{t}|x_{t+1})q^{\phi}_{t+1}(x_{t+1})$ but it is typically not a proper Markov transition density; i.e. $\int m_{t+1}^{\phi}(x_{t+1}|x_t)\textup{d} x_{t+1}\neq 1$ as $\int q^{\phi}_{t+1}(x_{t}|x_{t+1})q^{\phi}_{t+1}(x_{t+1}) \textup{d} x_{t+1}\neq q^{\phi}_t(x_t)$.

Let us assume that $q^{\phi}_k(x_{k})=q^{\phi_k}_k(x_{k})$ and $q^{\phi}_{k}(x_{k-1}|x_{k})=q^{\phi_{k}}_{k}(x_{k-1}|x_{k})$, then (\ref{eq:variationaldistribution}) and (\ref{eq:recursionq_t}) suggest that we only need to estimate $\phi_t$ at time $t$ as $y_t$ does not impact the backward Markov kernels prior to time $t$. However, we also have to be able to compute estimates of $\nabla_\phi \mathcal{L}_t(\theta, \phi)$ and $\nabla_\theta \mathcal{L}_t(\theta, \phi)$ to optimize parameters in a constant computational time at each time step, without having to consider the entire history of observations $y^t$. This is detailed in the next subsections where we show that the sequence of ELBOs $\{\mathcal{L}_{t}(\theta,\phi)\}_{t \geq 1}$ and its gradients  $\{\nabla_\theta \mathcal{L}_{t}(\theta,\phi)\}_{t \geq 1}$ and $\{\nabla_\phi \mathcal{L}_{t}(\theta,\phi)\}_{t \geq 1}$ can all be computed online when using the variational distributions $\{q^{\phi}_t(x_{1:t})\}_{t \geq 1}$ defined in (\ref{eq:variationaldistribution}).

\subsection{Forward recursion for the ELBO} \label{sec:ForwardRecursionForELBO}
We start by presenting a forward-only recursion for the computation of $\{\mathcal{L}_{t}(\theta,\phi)\}_{t \geq 1}$. This recursion illustrates the parallels between variational inference and RL and is introduced to build intuition.

\begin{proposition}\label{prop:ELBOrecursion}
The ELBO $\mathcal{L}_{t}(\theta,\phi)$ satisfies for $t\geq 1$
\begin{equation}\label{eq:ELBOexpectationV}
\mathcal{L}_{t}(\theta,\phi)=\mathbb{E}_{q_{t}^{\phi}(x_t)}[V^{\theta,\phi}_{t}(x_t)]
\quad\text{for}\quad V^{\theta,\phi}_{t}(x_t):=\mathbb{E}_{q_{t}^{\phi}(x_{1:t-1}|x_{t})}\left[ \log\left(p_{\theta}(x_{1:t},y^{t})\big/q_{t}^{\phi}(x_{1:t})\right)\right],
\end{equation}
with the convention $V^{\theta,\phi}_{1}(x_1):=r^{\theta,\phi}_1(x_0,x_1):=\log(p_\theta(x_1,y_1)/q^{\phi}_1(x_1))$. Additionally, we have
\begin{align}
\label{eq:recursionV}
V^{\theta,\phi}_{t+1}(x_{t+1})=&\,\mathbb{E}_{q_{t+1}^{\phi}(x_{t}|x_{t+1})}[V^{\theta,\phi}_{t}(x_{t})+r_{t+1}^{\theta,\phi}(x_{t},x_{t+1})], \quad \text{where} \\
\label{eq:reward}
    r_{t+1}^{\theta,\phi}(x_{t},x_{t+1}):=&\,\log \left(f_\theta(x_{t+1}|x_t)g_\theta(y_{t+1}|x_{t+1}) \big/ m_{t+1}^{\phi}(x_{t+1}|x_t)\right).
\end{align}
\vspace{-14pt}
\end{proposition}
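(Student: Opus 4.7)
The plan is to verify the two claims in order, reducing everything to the backward factorization~\eqref{eq:variationaldistribution} and the transition identity~\eqref{eq:recursionq_t}.

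For the first identity, I would start from the definition $\mathcal{L}_t(\theta,\phi)=\mathbb{E}_{q^\phi_t(x_{1:t})}[\log(p_\theta(x_{1:t},y^t)/q^\phi_t(x_{1:t}))]$, factor the outer expectation as $q^\phi_t(x_{1:t})=q^\phi_t(x_t)\,q^\phi_t(x_{1:t-1}|x_t)$, and apply the tower property. The inner conditional expectation is then exactly $V^{\theta,\phi}_t(x_t)$, giving $\mathcal{L}_t(\theta,\phi)=\mathbb{E}_{q^\phi_t(x_t)}[V^{\theta,\phi}_t(x_t)]$. The $t=1$ base case matches the stated convention since there are no latent variables to integrate out.

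For the recursion, the key observation is that the log integrand telescopes across a single step. Using the Markov decomposition $p_\theta(x_{1:t+1},y^{t+1})=p_\theta(x_{1:t},y^t)\,f_\theta(x_{t+1}|x_t)\,g_\theta(y_{t+1}|x_{t+1})$ together with $q^\phi_{t+1}(x_{1:t+1})=q^\phi_t(x_{1:t})\,m^\phi_{t+1}(x_{t+1}|x_t)$ from~\eqref{eq:recursionq_t}, I would rewrite
\begin{equation*}
\log\!\frac{p_\theta(x_{1:t+1},y^{t+1})}{q^\phi_{t+1}(x_{1:t+1})}=\log\!\frac{p_\theta(x_{1:t},y^t)}{q^\phi_t(x_{1:t})}+r^{\theta,\phi}_{t+1}(x_t,x_{t+1}).
\end{equation*}
Next I would factor the conditional $q^\phi_{t+1}(x_{1:t}|x_{t+1})$ using the backward form~\eqref{eq:variationaldistribution} applied at time $t+1$: this yields $q^\phi_{t+1}(x_{1:t}|x_{t+1})=q^\phi_{t+1}(x_t|x_{t+1})\prod_{k=1}^{t-1}q^\phi_{k+1}(x_k|x_{k+1})=q^\phi_{t+1}(x_t|x_{t+1})\,q^\phi_t(x_{1:t-1}|x_t)$, where the second equality uses the same backward factorization at time $t$. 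This is the crucial structural point: the backward kernels appearing for $k<t$ in $q^\phi_{t+1}$ and in $q^\phi_t$ are identical.

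Taking $\mathbb{E}_{q^\phi_{t+1}(x_{1:t}|x_{t+1})}$ of the displayed equation above, the first term integrates out $x_{1:t-1}$ against $q^\phi_t(x_{1:t-1}|x_t)$ to produce $V^{\theta,\phi}_t(x_t)$, which is then further averaged against $q^\phi_{t+1}(x_t|x_{t+1})$; the second term depends only on $(x_t,x_{t+1})$, so its expectation collapses to $\mathbb{E}_{q^\phi_{t+1}(x_t|x_{t+1})}[r^{\theta,\phi}_{t+1}(x_t,x_{t+1})]$. Adding these gives~\eqref{eq:recursionV}. The only step requiring care is the consistency of the backward kernels across $q^\phi_t$ and $q^\phi_{t+1}$, which is precisely the structural feature that makes the forward recursion possible; beyond that, everything is routine manipulation of conditional expectations.
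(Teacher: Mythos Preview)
Your proposal is correct and follows essentially the same route as the paper: decompose the log ratio via $q^\phi_{t+1}(x_{1:t+1})=q^\phi_t(x_{1:t})\,m^\phi_{t+1}(x_{t+1}|x_t)$ to isolate $r^{\theta,\phi}_{t+1}$, then use the backward factorization to split $q^\phi_{t+1}(x_{1:t}|x_{t+1})=q^\phi_{t+1}(x_t|x_{t+1})\,q^\phi_t(x_{1:t-1}|x_t)$ and apply the tower property. If anything, you are slightly more explicit than the paper about the consistency of the backward kernels across $q^\phi_t$ and $q^\phi_{t+1}$, which is indeed the structural point that makes the recursion work.
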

All proofs are given in Appendix \ref{sec:Proofs}. Proposition \ref{prop:ELBOrecursion} shows that we can compute $\mathcal{L}_{t}(\theta,\phi)$, for $t\ge 1$, online by recursively computing the functions $V^{\theta,\phi}_t$ using (\ref{eq:recursionV}) and then taking the expectation of $V^{\theta,\phi}_t$ w.r.t. $q^{\phi}_t(x_t)$ to obtain the ELBO at time $t$. 
Thus, given $V^{\theta,\phi}_{t}$, we can compute $V^{\theta,\phi}_{t+1}$ and $\mathcal{L}_{t+1}(\theta,\phi)$ using only $y_{t+1}$, with a cost that remains constant in $t$. 

This type of recursion is somewhat similar to those appearing in RL. We can indeed think of the ELBO $\mathcal{L}_{t}(\theta,\phi)$ as the expectation of a sum of ``rewards'' $r^{\theta,\phi}_k$ given in (\ref{eq:reward}) from $k=1$ to $k=t$ which we compute recursively using the ``value'' function $V^{\theta,\phi}_t$. However, while in RL the value function is given by the expectation of the sum of future rewards starting from $x_t$, the value function defined here is the expectation of the sum of past rewards conditional upon arriving in $x_t$. This yields the required forward recursion instead of a backward recursion. We expand on this link in Section \ref{sec:relatedwork} and Appendix \ref{sec:LinkWtihRL}.

\subsection{Forward recursion for ELBO gradient w.r.t. $\theta$}
A similar recursion can be obtained to compute  $\{\nabla_\theta \mathcal{L}_{t}(\theta,\phi)\}_{t \geq 1}$. This  recursion will be at the core of our online parameter learning algorithm. Henceforth, we will assume that regularity conditions allowing both differentiation and the interchange of integration and differentiation are satisfied.
\begin{proposition}
\label{prop:ELBOgradtheta} 
The ELBO gradient $\nabla_{\theta} \mathcal{L}_{t}(\theta,\phi)$ satisfies for $t\geq 1$
\begin{equation}
\nabla_{\theta}\mathcal{L}_{t}(\theta,\phi)=\mathbb{E}_{q^{\phi}_t(x_t)}[S^{\theta,\phi}_t(x_t)],\quad\text{where}\quad S^{\theta,\phi}_t(x_t):=\nabla_\theta V^{\theta,\phi}_t(x_t).
\end{equation}
Additionally, if we define $ s^{\theta}_{t+1}(x_{t},x_{t+1}):=\nabla_\theta r_{t+1}^{\theta,\phi}(x_{t},x_{t+1}) = \nabla_\theta \log f_{\theta}(x_{t+1}|x_{t})g_\theta(y_{t+1}|x_{t+1})$
then 
\begin{align}\label{eq:Srecursion}
     S^{\theta,\phi}_{t+1}(x_{t+1})=\mathbb{E}_{q^{\phi}_{t+1}(x_{t}|x_{t+1})}\left[S^{\theta,\phi}_t(x_t)+s^{\theta}_{t+1}(x_t,x_{t+1})\right].
\end{align} 
\vspace{-14pt}
\end{proposition}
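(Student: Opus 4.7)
The plan is to obtain both parts essentially by differentiating the corresponding identities from Proposition \ref{prop:ELBOrecursion} under the integral sign. Since the regularity conditions assumed in the statement allow the interchange of $\nabla_\theta$ with the expectations, the argument reduces to checking where the $\theta$-dependence lives.

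For the first assertion, I would start from $\mathcal{L}_t(\theta,\phi)=\mathbb{E}_{q_t^\phi(x_t)}[V^{\theta,\phi}_t(x_t)]$, given by Proposition \ref{prop:ELBOrecursion}. The key observation is that the measure $q_t^\phi(x_t)$ depends only on $\phi$, not on $\theta$. Hence under the regularity assumptions,
\begin{equation*}
\nabla_\theta \mathcal{L}_t(\theta,\phi)=\mathbb{E}_{q_t^\phi(x_t)}[\nabla_\theta V^{\theta,\phi}_t(x_t)]=\mathbb{E}_{q_t^\phi(x_t)}[S^{\theta,\phi}_t(x_t)],
\end{equation*}
which is exactly the claimed identity.

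For the recursion on $S^{\theta,\phi}_{t+1}$, I would differentiate both sides of \eqref{eq:recursionV} with respect to $\theta$. Again the expectation is taken against $q_{t+1}^\phi(x_t|x_{t+1})$ which is $\theta$-free, so
\begin{equation*}
S^{\theta,\phi}_{t+1}(x_{t+1})=\mathbb{E}_{q_{t+1}^\phi(x_t|x_{t+1})}\!\left[\nabla_\theta V_t^{\theta,\phi}(x_t)+\nabla_\theta r_{t+1}^{\theta,\phi}(x_t,x_{t+1})\right].
\end{equation*}
The first term inside the expectation is $S_t^{\theta,\phi}(x_t)$ by definition. For the second, I would recall from \eqref{eq:reward} that
\begin{equation*}
r_{t+1}^{\theta,\phi}(x_t,x_{t+1})=\log f_\theta(x_{t+1}|x_t)+\log g_\theta(y_{t+1}|x_{t+1})-\log m_{t+1}^\phi(x_{t+1}|x_t),
\end{equation*}
and note that the last term depends only on $\phi$ (through the ratio of variational factors in \eqref{eq:recursionq_t}). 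Therefore $\nabla_\theta r_{t+1}^{\theta,\phi}=s_{t+1}^\theta(x_t,x_{t+1})$, matching the definition given in the proposition. Substituting produces the stated recursion. The base case $S_1^{\theta,\phi}(x_1)=\nabla_\theta\log p_\theta(x_1,y_1)$ follows immediately from the convention $V_1^{\theta,\phi}(x_1)=\log(p_\theta(x_1,y_1)/q_1^\phi(x_1))$ since $q_1^\phi(x_1)$ is $\theta$-independent.

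There is no real obstacle beyond invoking differentiation under the integral sign, which is licensed by the blanket regularity assumption stated just before the proposition. The only point that deserves explicit mention in the writeup is that every integrating measure arising in the argument, namely $q_t^\phi(x_t)$ and the backward kernels $q_{t+1}^\phi(x_t|x_{t+1})$, is free of $\theta$; this is what makes the bookkeeping clean and allows the $\theta$-gradient to propagate through the Bellman-type recursion in exactly the same form as the value recursion itself, with the reward $r_{t+1}^{\theta,\phi}$ simply replaced by its $\theta$-gradient $s_{t+1}^\theta$.
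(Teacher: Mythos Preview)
Your proposal is correct and follows essentially the same approach as the paper: direct differentiation under the integral sign, using that the variational measures $q_t^\phi(x_t)$ and $q_{t+1}^\phi(x_t\mid x_{t+1})$ are $\theta$-free. The only cosmetic difference is that the paper first unwinds $V_t^{\theta,\phi}$ to obtain the explicit representation $S_t^{\theta,\phi}(x_t)=\mathbb{E}_{q_t^\phi(x_{1:t-1}\mid x_t)}[\nabla_\theta\log p_\theta(x_{1:t},y^t)]$ and derives the recursion from the additive structure of the log-joint, whereas you differentiate the Bellman recursion \eqref{eq:recursionV} directly; these are equivalent.
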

Proposition \ref{prop:ELBOgradtheta} shows that we can compute $\{\nabla_\theta \mathcal{L}_{t}(\theta,\phi)\}_{t \geq 1}$ online by propagating $\{S^{\theta,\phi}_t\}_{t \geq 1}$ using (\ref{eq:Srecursion}) and taking the expectation of the vector $S^{\theta,\phi}_t$ w.r.t. $q^{\phi}_t(x_t)$ to obtain the gradient at time $t$.
Similar ideas have been previously exploited in the statistics literature to obtain a forward recursion for the score vector $\nabla_{\theta} \ell_t(\theta)$ so as to perform recursive maximum likelihood parameter estimation; see e.g. \cite[Section 4]{kantas2015particle}. In this case, one has $\nabla_{\theta}\ell_{t}(\theta)=\mathbb{E}_{p_{\theta}(x_t|y^t)}[S^{\theta}_t(x_t)]$ where $S^{\theta}_t$ satisfies a recursion similar to (\ref{eq:Srecursion}) with $q^{\phi}_{t+1}(x_{t}|x_{t+1})$ replaced by $p_\theta(x_{t}|y^{t},x_{t+1})$.

\subsection{Forward recursion for ELBO gradient w.r.t. $\phi$}
We finally establish forward-only recursions for the gradient of the ELBO w.r.t. $\phi$ which will allow us to perform online variational filtering. 
We consider here the case where, for all $k$, $q^{\phi}_k(x_k)=q^{\phi_k}_k(x_k)$ and $q^{\phi}_k(x_{k-1}|x_k)=q^{\phi_k}_k(x_{k-1}|x_k)$ so $\mathcal{L}_t(\theta,\phi)=\mathcal{L}_t(\theta,\phi_{1:t})$ and  $V^{\theta,\phi}_{t}(x_t)=V^{\theta,\phi_{1:t}}_{t}(x_t)$. At time step $t$, we will optimize w.r.t. $\phi_t$ and hold all previous $\phi_{1:t-1}$ constant. Our overall variational posterior \eqref{eq:variationaldistribution} is denoted $ q_t^{\phi_{1:t}}(x_{1:t})$. The alternative approach where $\phi$ is shared through time (amortization) is investigated in Appendix \ref{sec:AppendixAmortization}.

Since the expectation is taken w.r.t. $q_t^{\phi_{1:t}}(x_{1:t})$ in $\mathcal{L}_{t}$, optimizing $\phi_t$ is slightly more difficult than for $\theta$. However, we can still derive a forward recursion for the $\phi$ gradients and we will leverage the reparameterization trick to reduce the variance of the gradient estimates; i.e. we assume that $x_t(\phi_t;\epsilon_t)\sim q^{\phi_t}_t(x_t)$  and $x_{t-1}(\phi_t;\epsilon_{t-1},x_t)\sim q^{\phi_t}_t(x_{t-1}|x_t)$ when $\epsilon_{t-1}\sim \lambda(\epsilon),~\epsilon_t \sim \lambda(\epsilon)$. 

\begin{proposition}\label{prop:ELBOgradphi}
The ELBO gradient $\nabla_{\phi_t} \mathcal{L}_{t}(\theta,\phi_{1:t})$ satisfies for $t\geq 1$
\begin{align}
\nabla_{\phi_t} \mathcal{L}_{t}(\theta,\phi_{1:t})=\nabla_{\phi_t} \mathbb{E}_{q_{t}^{\phi_t}(x_t)}[V^{\theta,\phi_{1:t}}_{t}(x_t)]=\mathbb{E}_{\lambda(\epsilon_t)}[\nabla_{\phi_t} V^{\theta,\phi_{1:t}}_{t}(x_t(\phi_t; \epsilon_t))].
\end{align}

Additionally, one has
\begin{multline}\label{eq:recursionQphi}
\nabla_{\phi_{t+1}} V^{\theta,\phi_{1:t+1}}_{t+1}(x_{t+1}(\phi_{t+1}; \epsilon_{t+1})) \\
 = \mathbb{E}_{\lambda(\epsilon_t)} \left[ T^{\theta,\phi_{1:t}}_t(x_{t}(\phi_{t+1}; \epsilon_{t}, x_{t+1}(\phi_{t+1}; \epsilon_{t+1}))) \frac{\textup{d} x_{t}(\phi_{t+1}; \epsilon_{t}, x_{t+1}(\phi_{t+1}; \epsilon_{t+1}))}{\textup{d} \phi_{t+1}} \right. \\ 
+ \left. \nabla_{\phi_{t+1}} r_{t+1}^{\theta,\phi_{t:t+1}}(x_{t}(\phi_{t+1}; \epsilon_{t}, x_{t+1}(\phi_{t+1}; \epsilon_{t+1})),x_{t+1}(\phi_{t+1}; \epsilon_{t+1}))\right] , 
\end{multline}
where 
$ T^{\theta,\phi_{1:t}}_t(x_{t}) := \frac{\partial}{\partial x_t}V^{\theta,\phi_{1:t}}_{t}(x_t)$ satisfies the forward recursion  
\begin{multline}\label{eq:recursionT}
T^{\theta,\phi_{1:t+1}}_{t+1}(x_{t+1}) =  \mathbb{E}_{\lambda(\epsilon_t)}\left[T^{\theta,\phi_{1:t}}_t(x_{t}(\phi_{t+1}; \epsilon_{t}, x_{t+1})) \frac{\partial x_{t}(\phi_{t+1}; \epsilon_{t}, x_{t+1})}{\partial x_{t+1}}\right. \\ 
+ \left. \nabla_{x_{t+1}} r_{t+1}^{\theta,\phi_{t:t+1}}(x_{t}(\phi_{t+1}; \epsilon_{t}, x_{t+1}),x_{t+1})\right]. 
\end{multline}
Here, $\frac{\textup{d} x_{t}(\phi_{t+1}; \epsilon_{t}, x_{t+1}(\phi_{t+1}; \epsilon_{t+1}))}{\textup{d} \phi_{t+1}}, \frac{\partial x_{t}(\phi_{t+1}; \epsilon_{t}, x_{t+1})}{\partial x_{t+1}}$ are Jacobians of appropriate dimensions.

\end{proposition}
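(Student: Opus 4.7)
The proof will build on \cref{prop:ELBOrecursion}, which already furnishes the identity $\mathcal{L}_t(\theta,\phi) = \mathbb{E}_{q_t^{\phi_t}(x_t)}[V_t^{\theta,\phi_{1:t}}(x_t)]$ together with the forward recursion (\ref{eq:recursionV}) for $V_t$. The first claim is an immediate application of the reparameterization trick: writing $x_t = x_t(\phi_t;\epsilon_t)$ with $\epsilon_t\sim\lambda$, the outer expectation becomes one over the parameter-free law $\lambda$, so the gradient $\nabla_{\phi_t}$ commutes with the expectation and simultaneously acts on $V_t^{\theta,\phi_{1:t}}$ through its explicit $\phi_t$-dependence and through the reparameterized sample $x_t(\phi_t;\epsilon_t)$.

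For the two recursions, I would reparameterize the backward sampling in (\ref{eq:recursionV}), writing $x_t = x_t(\phi_{t+1};\epsilon_t,x_{t+1})$ with $\epsilon_t\sim\lambda$, so that
\begin{equation*}
 V_{t+1}^{\theta,\phi_{1:t+1}}(x_{t+1}) = \mathbb{E}_{\lambda(\epsilon_t)}\bigl[V_t^{\theta,\phi_{1:t}}(x_t(\phi_{t+1};\epsilon_t,x_{t+1})) + r_{t+1}^{\theta,\phi_{t:t+1}}(x_t(\phi_{t+1};\epsilon_t,x_{t+1}),\, x_{t+1})\bigr].
\end{equation*}
The crucial observation is that $V_t^{\theta,\phi_{1:t}}$ depends on $\phi_{t+1}$ and on $x_{t+1}$ only through its argument $x_t$, so $\nabla_{\phi_{t+1}} V_t^{\theta,\phi_{1:t}}(x_t(\cdot)) = T_t^{\theta,\phi_{1:t}}(x_t)\,(\textup{d} x_t/\textup{d}\phi_{t+1})$ and $(\partial/\partial x_{t+1}) V_t^{\theta,\phi_{1:t}}(x_t(\cdot)) = T_t^{\theta,\phi_{1:t}}(x_t)\,(\partial x_t/\partial x_{t+1})$. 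For (\ref{eq:recursionQphi}) I would then substitute $x_{t+1} = x_{t+1}(\phi_{t+1};\epsilon_{t+1})$ and apply $\nabla_{\phi_{t+1}}$; for (\ref{eq:recursionT}) I would apply $\partial/\partial x_{t+1}$ directly. In both cases the regularity conditions assumed in the proposition permit interchanging differentiation with the outer expectation over $\lambda(\epsilon_t)$.

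The main obstacle will be chain-rule bookkeeping. In (\ref{eq:recursionQphi}), after substituting $x_{t+1}=x_{t+1}(\phi_{t+1};\epsilon_{t+1})$ the parameter $\phi_{t+1}$ appears in three distinct places (inside $x_t$, inside $x_{t+1}$, and in the $\phi_{t+1}$-slot of $r_{t+1}$); all three contributions must be collected and recognized as a single total derivative $\nabla_{\phi_{t+1}} r_{t+1}^{\theta,\phi_{t:t+1}}$ along with the $T_t\,(\textup{d}x_t/\textup{d}\phi_{t+1})$ term. In (\ref{eq:recursionT}), the symbol $\nabla_{x_{t+1}} r_{t+1}^{\theta,\phi_{t:t+1}}(x_t(\phi_{t+1};\epsilon_t,x_{t+1}),x_{t+1})$ must be read as the \emph{total} derivative with respect to $x_{t+1}$, absorbing the cross term $(\partial r_{t+1}/\partial x_t)(\partial x_t/\partial x_{t+1})$ through the reparameterization; otherwise the recursion would be incomplete. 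Once these conventions are fixed, both recursions follow directly from the chain rule applied to the displayed identity above.
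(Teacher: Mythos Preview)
Your proposal is correct and follows essentially the same approach as the paper's proof: reparameterize the backward kernel in the recursion for $V_{t+1}$, use that $V_t^{\theta,\phi_{1:t}}$ depends on $\phi_{t+1}$ and $x_{t+1}$ only through its argument $x_t$, and apply the chain rule. Your explicit discussion of the total-derivative convention for $\nabla_{\phi_{t+1}} r_{t+1}$ and $\nabla_{x_{t+1}} r_{t+1}$ is a useful clarification that the paper's proof leaves implicit.
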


\subsection{Estimating the ELBO and its Gradients}
\label{sec:theta_gradients}
As we consider the case $q^{\phi}_k(x_k)=q^{\phi_k}_k(x_k),~q^{\phi}_k(x_{k-1}|x_k)=q^{\phi_k}_k(x_{k-1}|x_k)$, we have $S_t^{\theta, \phi}(x_t) = S_t^{\theta, \phi_{1:t}}(x_t)$. At time $t$ we optimize $\phi_t$ and hold $\phi_{1:t-1}$ constant. Practically, we are not able to compute in closed-form the functions $V^{\theta,\phi_{1:t}}_{t}(x_t)$, $S_t^{\theta, \phi_{1:t}}(x_t)$ and $T_t^{\theta, \phi_{1:t}}(x_t)$ appearing in the forward recursions of $ \mathcal{L}_t(\theta,\phi_{1:t})$,  $\nabla_\theta \mathcal{L}_t(\theta,\phi_{1:t})$ and $\nabla_\phi \mathcal{L}_t(\theta,\phi_{1:t})$ respectively. However, we can exploit the above recursions to approximate these functions online using 
regression as is commonly done in RL. We then show how to use these gradients for online filtering and parameter learning.

We approximate $S_{t+1}^{\theta, \phi_{1:t+1}}$ with $\hat{S}_{t+1}$. Equation (\ref{eq:Srecursion}) shows that $\hat{S}_{t+1}$ can be learned using $\hat{S}_{t}$ through regression of the simulated dataset\footnote{We define $\hat{S}_0 := 0$, $x_0^i := \emptyset$, $s_1^\theta(x_0, x_1) = \nabla_{\theta} \log \mu_\theta(x_1) g_\theta(y_1|x_1)$ and $q_1^{\phi_1}(x_0, x_1) := q_1^{\phi_1}(x_1)$.} $\{x_{t+1}^i, \hat{S}_{t}(x_{t}^i)+s_{t+1}^\theta(x_{t}^i, x_{t+1}^i)\}$
with $(x^i_t,x^i_{t+1})\overset{\textup{i.i.d.}}{\sim} q^{\phi_{t+1}}_{t+1}(x_t,x_{t+1})$ for $i=1,...,N$ (see Appendix \ref{sec:ObjectiveForRecursionFitting} for derivation). We can use neural networks to model $\hat{S}_t$ or Kernel Ridge Regression (KRR). Note that the use of KRR to estimate gradients for variational learning has recently been demonstrated by \cite{li2020amortised}.

We similarly approximate $T_{t+1}^{\theta, \phi_{1:t+1}}$ with $\hat{T}_{t+1}$.
As before, we can model $\hat{T}_{t+1}$ using neural networks or KRR. We use recursion (\ref{eq:recursionT}) and $\hat{T}_t$ to create the following dataset for regression\footnote{Similarly, we define $\hat{T}_0 := 0$, $x_0^i := \emptyset$ and $r_1^{\theta, \phi_{0:1}}$ as in Proposition \ref{prop:ELBOrecursion}.}
\begin{align}
    \Big\{x_{t+1}^i,~~  \hat{T}_{t}(x_{t}(\phi_{t+1}; \epsilon^i_{t},x_{t+1}^i)) \tfrac{\partial x_{t}(\phi_{t+1}; \epsilon^i_{t}, x_{t+1}^i)}{\partial x_{t+1}^i} + \nabla_{x_{t+1}^i} r_{t+1}^{\theta, \phi_{t:t+1}}(x_{t}(\phi_{t+1}; \epsilon^i_{t}, x_{t+1}^i), x_{t+1}^i) \Big\},
\end{align}
where $x^i_{t+1}\sim q^{\phi_{t+1}}_{t+1}(x_{t+1})$ and $\epsilon^i_{t}\sim \lambda(\epsilon)$ for $i=1,...,N$. The choice of the distribution over the inputs, $x_{t+1}^i$, for each simulated dataset is arbitrary but it will determine where the approximations are most accurate. We choose $q_{t+1}^{\phi_{t+1}}(x_{t+1})$ to best match where we expect the approximations to be evaluated, more details are given in Appendix \ref{sec:ObjectiveForRecursionFitting}.

Note that if one is interested in computing online an approximation of the ELBO, we can again similarly approximate $V^{\theta,\phi_{1:t}}_t(x_t)$ using regression to obtain $\hat{V}_t(x_t)$ by leveraging (\ref{eq:recursionV}). We will call the resulting approximate ELBO the Recursive ELBO (RELBO). We could also then differentiate $\hat{V}_t(x_t)$ w.r.t. $x_t$ to obtain an alternative method for estimating $T_t^{\theta, \phi_{1:t}}$ and optimizing $\phi_t$. However, as we are ultimately interested in accurate gradients, this approach does not exploit the readily available gradient information during the regression stage. 

By approximating $T_{t+1}^{\theta, \phi_{1:t+1}}$ with $\hat{T}_{t+1}$ and $S_{t+1}^{\theta,  \phi_{1:t+1}}$ with $\hat{S}_{t+1}$, we introduce some bias into our gradient estimates. We can trade bias for variance by using modified recursions; e.g. 
\begin{equation}
S^{\theta,\phi_{1:t+1}}_{t+1}(x_{t+1})=\textstyle{\mathbb{E}_{q^{\phi_{t-L+2:t+1}}_{t}(x_{t-L+1:t}|x_{t+1})} \left[S^{\theta,\phi_{1:t-L+1}}_{t-L+1}(x_{t-L+1})+\sum_{k=t-L+1}^t s^{\theta}_{k+1}(x_k,x_{k+1})\right]}.
\end{equation}
As $L$ increases, we will reduce bias but increase variance. Such ideas are also commonly used in RL but we will limit ourselves here to using $L=1$.

\subsection{Online Parameter Estimation}
Assume, for the sake of argument, that one has access to the log evidence $\ell_t(\theta)$ and that the observations arise from the SSM with parameter $\theta^{\star}$. Under regularity conditions, the average log-likelihood $\ell_t(\theta)/t$ converges as $t \rightarrow \infty$ towards a function $\ell(\theta)$  which is maximized at $\theta^{\star}$; see e.g. \cite{douc2014nonlinear,tadic2020asymptotic}. 
We can maximize this criterion online using Recursive Maximum Likelihood Estimation (RMLE) \citep{legland1997recursive,tadic2010analyticity,kantas2015particle,tadic2020asymptotic} which consists of updating the parameter estimate $\theta$ using
\begin{equation}\label{eq:RMLE}
\theta_{t+1}=\theta_{t}+\eta_{t+1} \left(\mathbb{E}_{p_{\theta_{0:t}}(x_t,x_{t+1}|y^{t+1})}[S_t(x_t)+s^{\theta_t}_{t+1}(x_t,x_{t+1})]-\mathbb{E}_{p_{\theta_{0:t-1}}(x_{t}|y^{t})}[S_{t}(x_{t})]\right).
\end{equation}
The difference of two expectations on the r.h.s. of \eqref{eq:RMLE} is an approximation of the gradient of the log-predictive $\log p_{\theta}(y_{t+1}|y^{t})$ evaluated at $\theta_t$. The approximation is given by $\nabla \log p_{\theta_{0:t}}(y^{t+1})-\nabla \log p_{\theta_{0:t-1}}(y^{t})$ with the notation $\nabla \log p_{\theta_{0:t}}(y^{t+1})$ corresponding to the expectation of the sum of terms $s^{\theta_{k}}_{k+1}(x_k,x_{k+1})$ w.r.t. the joint posterior states distribution defined by using the SSM with parameter $\theta_k$ at time $k+1$.

We proceed similarly in the variational context and update the parameter using 
\begin{align}
    \theta_{t+1} \!=\! \theta_t \!+\! \eta_{t+1} \Big(\! \E_{q_{t+1}^{\phi_{t+1}}(x_t, x_{t+1})} \left[ \hat{S}_t(x_t) \!+\! s_{t+1}^{\theta_t}(x_t, x_{t+1}) \right] \!-\! \E_{q_{t}^{\phi_{t}}(x_t)} \left[  \hat{S}_t(x_t)\right]\!\Big).
\end{align}
Here  $\hat{S}_t(x_t)$ approximates $S_t(x_t)$ satisfying $S_{t+1}(x_{t+1}):=\mathbb{E}_{q^{\phi_{t+1}}_{t+1}(x_t|x_{t+1})}[S_t(x_t)+s^{\theta_t}_{t+1}(x_t,x_{t+1})]$. We compute $\hat{S}_t$ as in Section \ref{sec:theta_gradients} with a simulated dataset using $\hat{S}_{t-1}$ and $\theta_{t-1}$.

\begin{algorithm}[t]
\SetAlgoNoLine
\DontPrintSemicolon
{\small \For{$t = 1, \dots, T$} {
    Initialize $\phi_t$ e.g. $\phi_t \leftarrow \phi_{t-1}$\;
    {\color{gray} \tcc{Update $\phi_t$ using $M$ stochastic gradient steps}}
    \For {$m=1, \dots, M$} { Sample $x_{t-1}^i, x_t^i \sim q^{\phi_t}_t(x_{t-1}, x_t)$ using reparameterization trick for $i=1,...,N$\;
        $\phi_t \leftarrow \phi_t + \gamma_m \frac{1}{N} \sum_{i=1}^N \{\hat{T}_{t-1}(x_{t-1}^i) \frac{\textup{d} x_{t-1}^i}{\textup{d} \phi_{t}} + \nabla_{\phi_t} r_t(x_{t-1}^i, x_t^i) \}$\;
    }
{\small \color{gray} \tcc{Update $\hat{T}_t(x_t)$ and $\hat{S}_t(x_t)$ as in Section \ref{sec:theta_gradients}}}
$\hat{T}_t(x_t) \overset{regression}{\leftarrow} \hat{T}_{t-1}(x_{t-1}) \frac{\partial x_{t-1}}{\partial x_t} + \nabla_{x_{t}} r_{t}(x_{t-1}, x_t)$.\;%, \quad x_{t-1}, x_t \sim q_t^{\phi_t}(x_{t-1}, x_t)$\;
$\hat{S}_t(x_t) \overset{regression}{\leftarrow} \hat{S}_{t-1}(x_{t-1}) + s_t^{\theta_{t-1}}(x_{t-1}, x_t)$. \;
% x_{t-1}, x_t \sim q_t^{\phi_t}(x_{t-1}, x_t)$\;
{\small \color{gray} \tcc{Update $\theta$ using a stochastic gradient step}}
Sample $x_{t-1}^i, x_t^i \sim q^{\phi_t}_t(x_{t-1}, x_t), \quad \tilde{x}_{t-1}^i \sim q^{\phi_{t-1}}_{t-1}(x_{t-1})$ for $i=1,...,N$\;
$\theta_t \leftarrow \theta_{t-1} + \eta_t \frac{1}{N} \sum_{i=1}^N \{ \hat{S}_{t-1}(x_{t-1}^i) + s_t^{\theta_{t-1}}(x_{t-1}^i, x_t^i)  - \hat{S}_{t-1}(\tilde{x}_{t-1}^i) \}$\;
}
}
 \caption{Online Variational Filtering and Parameter Learning.}
\label{alg:TheAlgorithm}
\end{algorithm}

Putting everything together, we summarize our method in Algorithm \ref{alg:TheAlgorithm} using a simplified notation to help build intuition, a full description is given in Appendix \ref{sec:AppendixAlgorithm}. It is initialized using initial parameters $\phi_1, \theta_0$. We re-iterate the algorithm's computational cost does not grow with $t$. We need only store fixed size $\hat{T}$ and $\hat{S}$ models as well as the most recent $\phi_t$ and $\theta_t$ parameters. When performing backpropagation, $\hat{T}$ and $\hat{S}$ summarize all previous gradients, meaning we do not have to roll all the way back to $t=1$. Therefore, we only incur standard backpropagation computational cost w.r.t. $\phi_t$ and $\theta$. To scale to large $d_x$, we can use mean field $q_t^{\phi_t}(x_t)$, $q_t^{\phi_t}(x_{t-1}|x_t)$ keeping costs linear in $d_x$.
\vspace{-0.05cm}
\section{Related Work}
\label{sec:relatedwork}

As mentioned briefly in Section \ref{sec:ForwardRecursionForELBO}, our recursive method has close links with RL, which we make explicit here.
In `standard' RL, we have a value function and corresponding Bellman recursion given by
\begin{equation}\label{eq:backwardrecursionRL}
    V^{\text{RL}}_t (s_t) := \mathbb{E} \left[ \sum_{k=t}^T r(s_k, a_k) \right], \quad \quad V^{\text{RL}}_t(s_t) = \mathbb{E}_{s_{t+1}, a_t} \left[ r(s_t, a_t) + V^{\text{RL}}_{t+1}(s_{t+1}) \right],
\end{equation}
where $(s_t$, $a_t)$ is the state-action pair at time $t$. Whereas the RL value function summarizes future rewards and so recurses backward in time, our `value' function summarizes previous rewards and recurses forward in time. Writing this using the state-action notation, one obtains
\begin{equation}\label{eq:forwardrecursionRL}
    V_{t}(s_{t}) := \mathbb{E} \left[ \sum_{k=1}^{t} r(s_k, a_k) \right], \quad \quad V_{t+1}(s_{t+1}) = \mathbb{E}_{a_{t+1}, s_t} \left[ r(s_{t+1}, a_{t+1}) + V_t(s_t) \right].
\end{equation}
Note we have defined an \textit{anti-causal} graphical model, with $s_t$ depending on $s_{t+1}$ and $a_{t+1}$. If we further let $s_t = x_t$, $a_t = x_{t-1} \sim q^{\phi}_{t}(x_{t-1}|x_{t})$, $P(s_t|s_{t+1}, a_{t+1}) = \delta (s_t = a_{t+1})$ and $r(s_t, a_t) = r(x_t, x_{t-1})$ as in equation (\ref{eq:reward}), then the recursion in equation (\ref{eq:forwardrecursionRL}) for $V_t$ corresponds to the recursion in equation (\ref{eq:recursionV}) and $\mathcal{L}_{T} = \mathbb{E}_{q_T^{\phi}(x_T)}\left[ V_T(x_T) \right]$. We then differentiate this recursion with respect to $\theta$ and $x_{t+1}$ to get our gradient recursions (\ref{eq:Srecursion}) and (\ref{eq:recursionT}) respectively. Full details are given in Appendix \ref{sec:LinkWtihRL}. A similar correspondence between state actions and successive hidden states was also noted in \citep{weber2015reinforced} which explores the use of RL ideas in the context of variational inference. However, \citep{weber2015reinforced} exploits this correspondence to propose a backward in time recursion for the ELBO of the form (\ref{eq:backwardrecursionRL}) initialized at the time of the last observation of the SSM. When a new observation is collected at the next time step, this backward recursion has to be re-run which would lead to a computational time increasing linearly at each time step. In \cite{kim2020variational}, the links between RL and variational inference are also exploited. The likelihood of future points in an SSM is approximated using temporal difference learning \citep{sutton2018reinforcement}, but the proposed algorithm is not online. 

To perform online variational inference, \cite{zhao2020variational} proposes to use the decomposition (\ref{eq:logevidence}) of the log evidence $\ell_t(\theta)$ and lower bound each term $\log p_{\theta}(y_k|y^{k-1})$ appearing in the sum using 
\begin{equation}\label{eq:localELBO}
    \mathbb{E}_{q^{\phi}_{k}(x_{k-1},x_k)}\left[\log \frac{f_\theta(x_k|x_{k-1})g_\theta(y_k|x_k) p_\theta(x_{k-1}|y^{k-1})}{q^\phi_k(x_{k-1},x_k)}\right]\leq \log p_{\theta}(y_k|y^{k-1}).
\end{equation}
Unfortunately, the term on the l.h.s. of (\ref{eq:localELBO}) cannot be evaluated unbiasedly as it relies on the intractable filter $p_\theta(x_{k-1}|y^{k-1})$ so  \cite{zhao2020variational} approximates it by $p_\theta(x_{k-1}|y^{k-1}) \approx q^{\phi}_{k-1}(x_{k-1})$ to obtain the following \emph{Approximate} ELBO (AELBO) by summing over $k$: 
\begin{equation}\label{eq:approixmateELBO}
    \mathcal{\widetilde{L}}_{t}(\theta,\phi_{1:t})=\sum\nolimits_{k=1}^t \mathbb{E}_{q^{\phi}_{k}(x_{k-1},x_k)}\left[\log r^{\theta,\phi}_k(x_{k-1},x_k)\right].
\end{equation}
Additionally, \cite{zhao2020variational} makes the assumption $q^{\phi}_{k}(x_{k-1},x_k)=q^{\phi}_{k-1}(x_{k-1})q^{\phi}_{k}(x_k)$ and we will refer to (\ref{eq:approixmateELBO}) in this case as AELBO-1.
While \cite{gregor2018temporal} does not consider online learning, their objective is actually a generalization of \cite{zhao2020variational}, with $q^{\phi}_{k}(x_{k-1},x_k)=q^{\phi}_k(x_k)q^{\phi}_k(x_{k-1}|x_k)$, and we will refer to (\ref{eq:approixmateELBO}) in this case as AELBO-2.
It can be easily shown that AELBO-2 is only equal to the true ELBO given in Proposition \ref{prop:ELBOrecursion} in the (unrealistic) scenario where $q^{\phi}_{k}(x_{k})=p_\theta(x_k|y^k)$ for all $k$. Moreover, in both cases the term $p_\theta(x_{k-1}|y^{k-1})$ is replaced by $q^{\phi}_{k-1}(x_{k-1})$, causing a term involving $\theta$ to be ignored in gradient computation. The approach developed here can be thought of as a way to correct the approximate ELBOs computed in \citep{gregor2018temporal,zhao2020variational} in a principled manner, which takes into account the discrepancy between the filtering and approximate filtering distributions, and maintains the correct gradient dependencies in the computation graph. Finally \cite{zhao2020streaming} relies on the PF to do online variational inference. However the variational approximation of the filtering distribution is only implicit as its expression includes an intractable expectation and, like any other PF technique, its performance is expected to degrade significantly with the state dimension \cite{bengtsson2008curse}.

\section{Experiments} 
\label{sec: Experiments}

\subsection{Linear Gaussian State-Space Models}
We first consider a linear Gaussian SSM for which the filtering distributions can be computed using the KF and the RMLE recursion (\ref{eq:RMLE}) can be implemented exactly.  Here the model is defined as
\begin{align}
    f_\theta(x_t|x_{t-1}) = \mathcal{N}(x_t; Fx_{t-1}, U), \quad g_\theta(y_t|x_t) = \mathcal{N}(y_t; Gx_t, V),
\end{align}
where $F \in \mathbb{R}^{d_x \times d_x}$, $G \in \mathbb{R}^{d_y \times d_x}$, $U \in \mathbb{R}^{d_x \times d_x}$, $V \in \mathbb{R}^{d_y \times d_y}$, $\theta = \{F,G\}$. 
We let
\begin{equation}
    q_t^{\phi_t}(x_t) = \mathcal{N}\left(x_t; \mu_t, \text{diag}(\sigma^2_t)\right), \quad q_t^{\phi_t}(x_{t-1}|x_t) = \mathcal{N}\left(x_{t-1}; W_t x_t + b_t, \text{diag}(\tilde{\sigma}_t^2)\right),
\end{equation}
with $\phi_t = \{ \mu_t, \log \sigma_t, W_t, b_t, \log \tilde{\sigma_t} \}$. When $f_\theta(x_t | x_{t-1})$ is linear Gaussian and we use a Gaussian for $q_{t-1}^{\phi_{t-1}}(x_{t-1})$, we can select $q_t^{\phi_t}(x_{t-1}|x_t) \propto f_\theta(x_t | x_{t-1}) q_{t-1}^{\phi_{t-1}}(x_{t-1})$ as noted in \cite{pfrommer2022LVSSF}. In this experiment however, we aim to show our full method can converge to known ground truth values hence still fully parameterize $q_t^{\phi_t}(x_{t-1}|x_t)$ as well as setting the matrices $F,G,U,V$ to be diagonal, so that $p_\theta(x_t|y^t)$ and $p_\theta(x_{t-1}| y^{t-1},x_t)$ are in the variational family.

For $d_x=d_y=10$, we first demonstrate accurate state inference by learning $\phi_t$ at each time step whilst holding $\theta$ fixed at the true value. We represent $\hat{T}_t(x_t)$ non-parametrically using KRR. Full details for all experiments are given in Appendix \ref{sec:AppendixExperiments}.\footnote{ {Code available at \url{https://github.com/andrew-cr/online_var_fil}}} Figure \ref{fig:linear_q_phi} illustrates how, given extra computation, our variational approximation comes closer and closer to the ground truth, the accuracy being limited by the convergence of each inner stochastic gradient ascent procedure.
We then consider online learning of the parameters $F$ and $G$ using Algorithm \ref{alg:TheAlgorithm}, comparing our result to RMLE and a variation of  Algorithm \ref{alg:TheAlgorithm} using AELBO-1 and 2 (see Section \ref{sec:relatedwork}). Our methodology converges much closer to the analytic baseline (RMLE) than AELBO-2  \cite{gregor2018temporal} and exhibits less variance, even though the variational family is sufficiently expressive for AELBO-2 to learn the correct backward transition. In addition, we find that AELBO-1 \cite{zhao2020variational} did not produce reliable parameter estimates in this example, as it relies on a variational approximation that ignores the dependence between $x_{k-1}$ and $x_k$.
As expected, our method performs slightly worse than the analytic RMLE, as inevitably small errors will be introduced during stochastic optimization and regression.

\begin{figure}
\centering
\subfloat[]{
    \includegraphics[trim=0 20 0 0 ,width=0.4\textwidth]{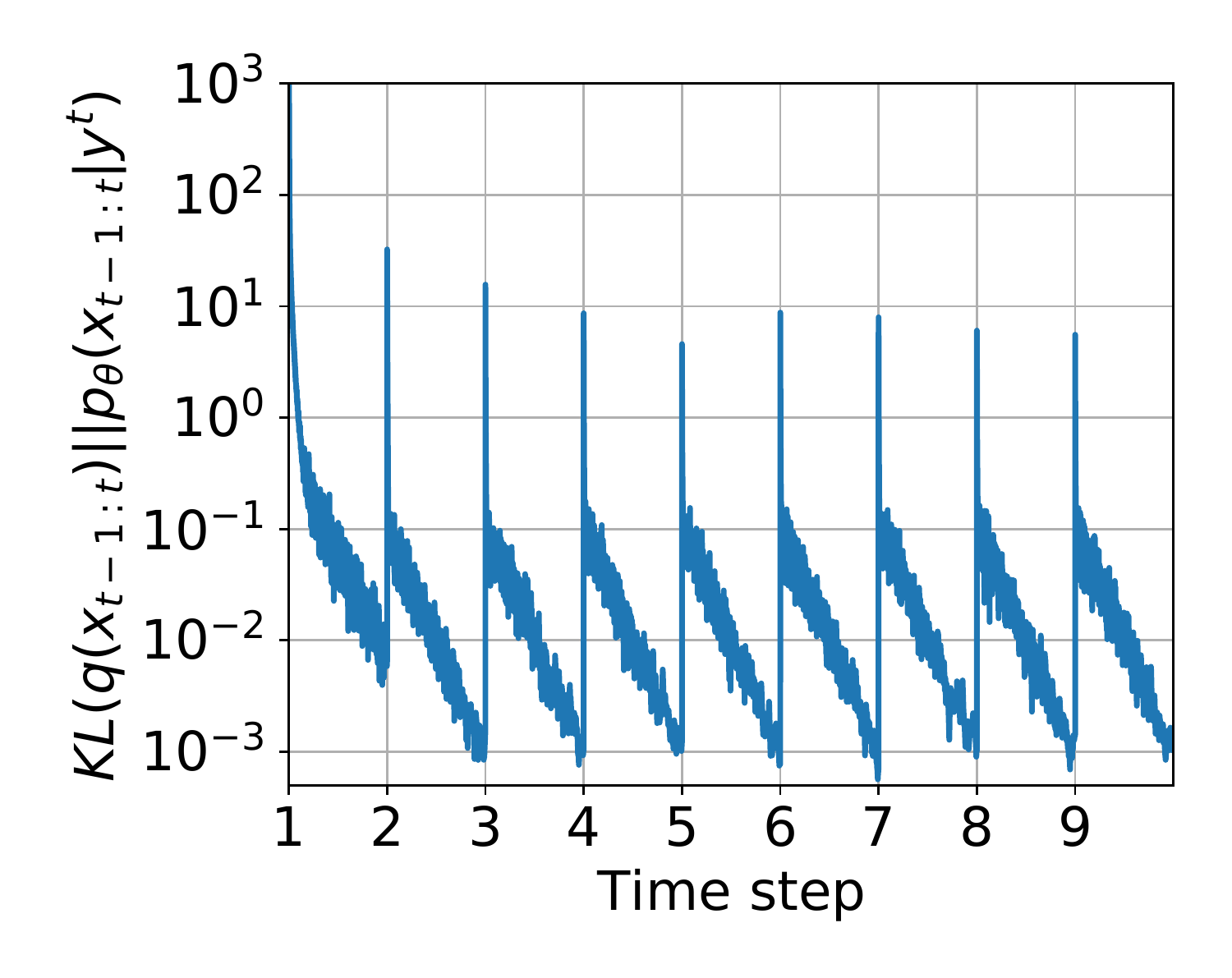}
    \label{fig:linear_q_phi}
}
\subfloat[]{
  \includegraphics[trim=0 20 0 0, width=0.59\textwidth]{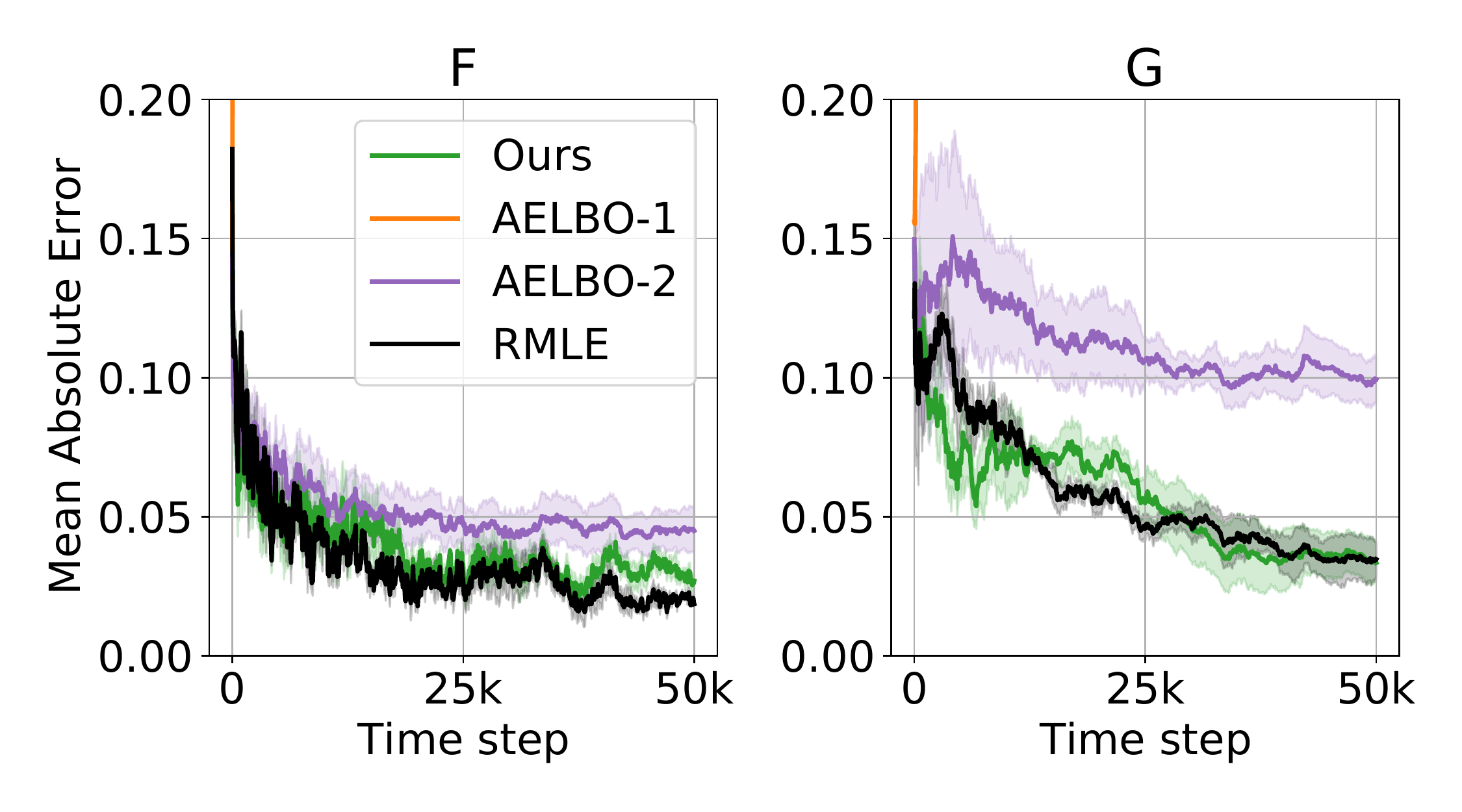}
  \label{fig:linear_model_learning}
}
    \caption{\textbf{(a)} $\text{KL}(q_t^{\phi_t}(x_{t-1},x_t) || \linebreak[1] p_\theta(x_{t-1},x_{t}|y^t))$ vs time step of the SSM. Between each time step, we plot the progress of the $\text{KL}$ over 5000 iterations of inner loop $\phi_t$ optimization. \textbf{(b)} Mean Absolute Error for model parameters $F$ (left) and $G$ (right) vs time step (AELBO-1 off the scale).}
\end{figure}

\subsection{Chaotic Recurrent Neural Network}

We next evaluate the performance of our algorithm for state estimation in non-linear, high-dimensional SSMs. We reproduce the Chaotic Recurrent Neural Network (CRNN) example in \cite{zhao2020streaming},
but with state dimension $d_x= 5, 20$, and $100$.
This non-linear model is an Euler approximation of the continuous-time
recurrent neural network dynamics 
\[
f(x_{t}|x_{t-1})=\mathcal{N}\left(x_t; x_{t-1}+\Delta \tau^{-1}\left(-x_{t-1}+\gamma W\tanh(x_{t-1})\right),U\right),
\]
and the observation model is linear with additive noise from
a Student's t-distribution. We compare our algorithm against ensemble
KF (EnKF), bootstrap PF (BPF), as well as variational methods using AELBO-1 and AELBO-2. We let $q_t^{\phi_t}(x_{t-1}|x_t) = \mathcal{N}(x_{t-1}; \text{MLP}_t^{\phi_t}(x_t), \text{diag}(\tilde{\sigma}_t^2))$ and $q_t^{\phi_t}(x_t) = \mathcal{N}\left(x_t; \mu_t, \text{diag}(\sigma^2_t)\right)$ 
where we use a 1-layer Multi-Layer Perceptron (MLP) with 100 neurons for each $q_t^{\phi_t}(x_{t-1}|x_t)$. 
We generate a dataset of length 100 using the same settings as \cite{zhao2020streaming}, and each algorithm is run 10 times to report the mean and standard deviation. We also match approximately the computational complexity for all methods. From Table \ref{tab:crnn_rmse_elbo}, we observe that the EnKF performs poorly on this non-linear model, while the PF performance degrades significantly with $d_x$, as expected. Among variational methods, AELBO-1 does not give as accurate state estimation, while AELBO-2 and our method achieve the lowest error in terms of RMSE. 
However, our method achieves the highest ELBO; i.e. lowest KL between the variational approximation and the true posterior since $\theta$ is fixed - an effect not represented using just the RMSE.
We confirm this is the case in Appendix \ref{sec:ChaotircRecurrentNeuralNetworkApdx} by comparing our variational filter means $\mu_t$ against the `ground truth' posterior mean for $d_x=5$ computed using PF with 10 million particles.
Furthermore, our method is also able to accurately estimate the true ELBO online. Figure \ref{fig: CTRNN ELBO learning} shows that our online estimate of the ELBO, RELBO (Section \ref{sec:theta_gradients}), is very close to the true ELBO, whereas AELBO-2 is biased and consistently overestimates it. Further, AELBO-1 is extremely loose meaning its approximation of the joint state posterior is very poor.

Using this CRNN problem, we also investigate the penalty we pay for our online method. At time $t$, we train $q_t^{\phi_t}(x_t),q_t^{\phi_t}(x_{t-1}|x_t)$ and hold $q_k^{\phi_k}(x_{k-1}|x_{k}), \, k < t$ fixed. This is because all information to learn the true factor $p_\theta(x_{k-1}|y^{k-1}, x_k)$ is available at time $k$ as it does not depend on future observations. However, when $p_\theta(x_{k-1}|y^{k-1}, x_k)$ is not in the variational family (as in this CRNN example), $q_k^{\phi_k}(x_{k-1}|x_k)$ will aim to be most accurate in the regions of $x_k$ in which it is typically evaluated. The evaluation distribution over $x_k$ does depend on future variational factors and so could shift over time. This may result in learning a different variational joint $q_t^{\phi_{1:t}}(x_{1:t})$ between when using our online ELBO and when just using the full offline ELBO (\ref{ELBOt}). We quantify this difference by training the same joint variational distribution using either our online ELBO or the offline ELBO (\ref{ELBOt}). Figure \ref{fig: CRNN offline comparison single} plots the final mean and standard deviations of the marginals of the trained $q_t^{\phi_{1:t}}(x_{1:t})$ in both cases. We see that these quantities are very close, suggesting this effect is not an issue on this example. This may be due to the evaluation distribution over $x_k$ not changing a significant enough amount to cause appreciable changes in the learned variational factors. We show this result holds over dimensions and seeds in Appendix \ref{sec:ChaotircRecurrentNeuralNetworkApdx}.

\begin{figure}[h]
\centering
\vspace{-1em} 
\subfloat[]{
    \includegraphics[trim=30 0 0 10 , width=0.48\textwidth]{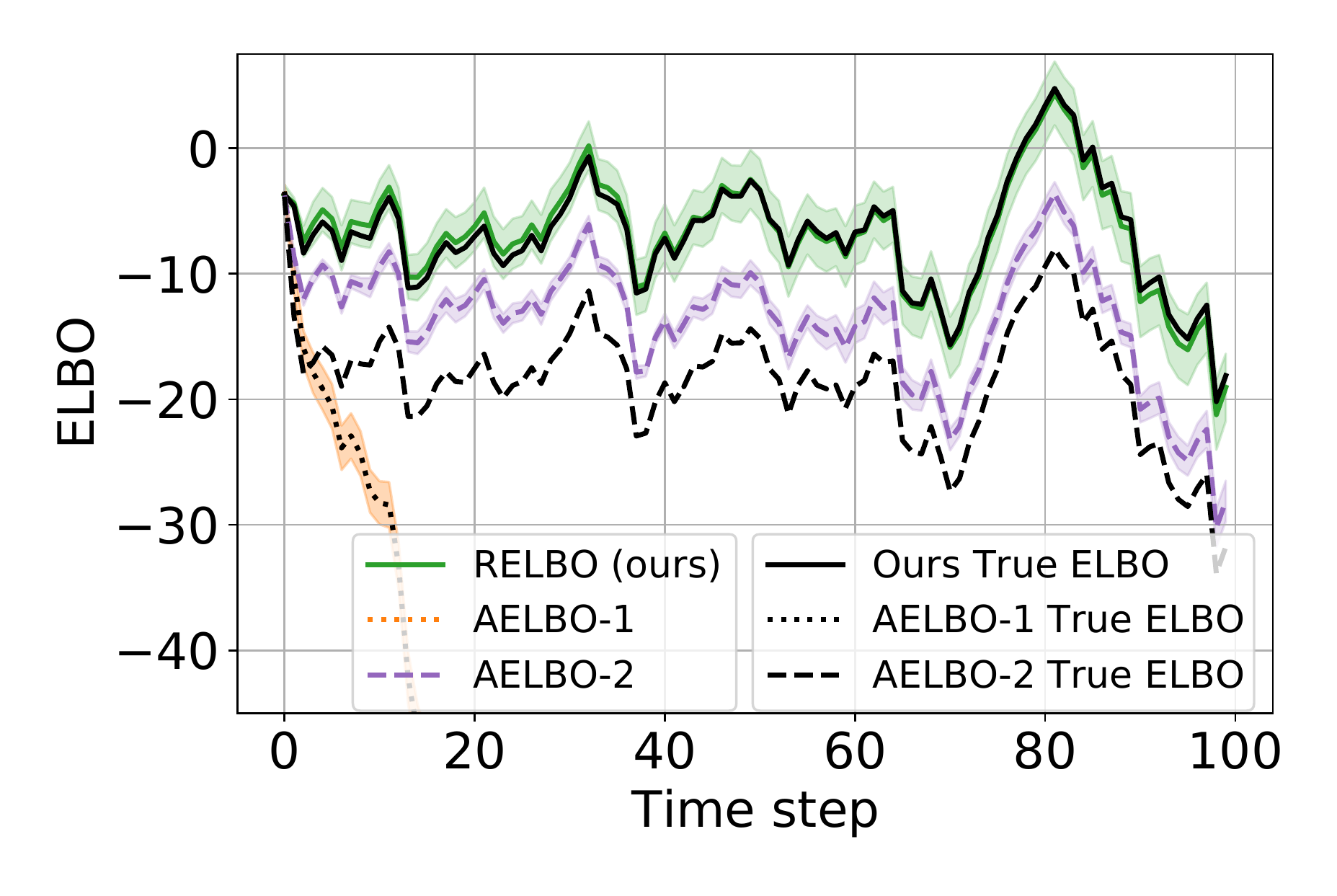}
    \label{fig: CTRNN ELBO learning}
}
\subfloat[]{
    \includegraphics[trim=0 -4 0 0, width=0.48\textwidth]{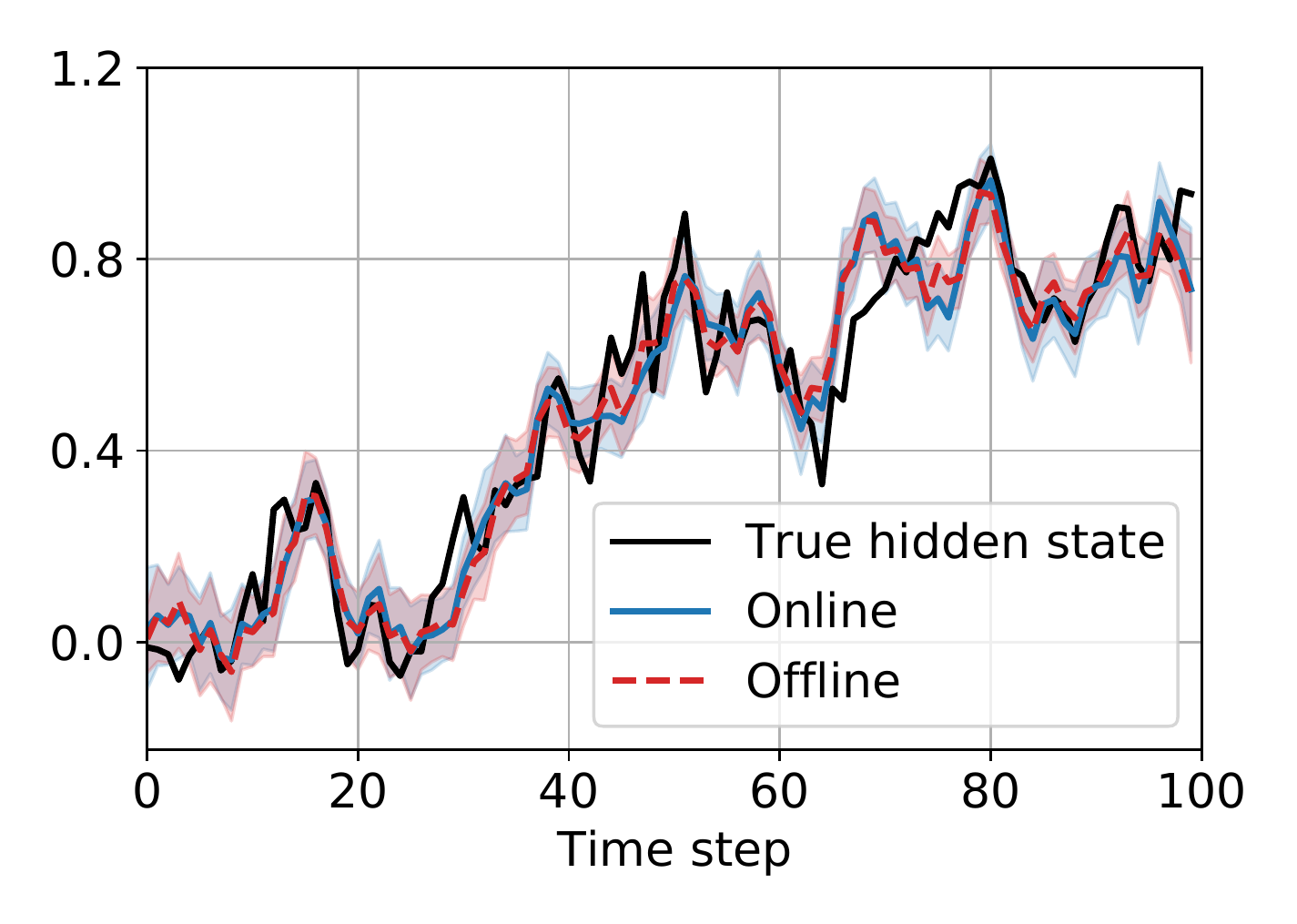}
    \label{fig: CRNN offline comparison single}
}
\vspace{-0.5em} 
    \caption{ \textbf{(a)} Estimates and true values of the ELBO on the Chaotic RNN task. RELBO uses KRR for $\hat{V}_t$ whilst for the other methods we use \cref{eq:approixmateELBO}. \textbf{(b)} Comparison between joint variational distributions trained online and offline on the CRNN task. The colored lines show the mean of $q_t^{\phi_{1:t}}(x_{1:t})$ whilst the shaded region represent $\pm 1$ std. The true hidden state is also shown in black.}
\end{figure}

\tabcolsep=0.12cm
\begin{table}
\centering
\small
    \caption{Root Mean Squared Error between filtering mean and true state and the average true ELBO for the 5 methods in varying dimensions on the Chaotic RNN task. }
    \vspace{1em} 
    \label{tab:crnn_rmse_elbo}
    \begin{tabular}{clccccc}
    \toprule 
    $d_{x}$ &  & EnKF & BPF & AELBO-1 & AELBO-2 & Ours\tabularnewline
    \midrule
    \midrule 
    \multirow{3}{*}{5} & Filter RMSE & 0.1450\textpm 0.0026 & \textbf{0.1026\textpm 0.0001} & 0.1284\textpm 0.0035 & 0.1035\textpm 0.0012 & 0.1032\textpm 0.0005\tabularnewline
    \cmidrule{2-7} \cmidrule{3-7} \cmidrule{4-7} \cmidrule{5-7} \cmidrule{6-7} \cmidrule{7-7} 
     & ELBO (nats) & - & - & -220.52\textpm 6.2768 & -30.944\textpm 2.2928 & \textbf{-15.845\textpm 1.7385}\tabularnewline
    \cmidrule{2-7} \cmidrule{3-7} \cmidrule{4-7} \cmidrule{5-7} \cmidrule{6-7} \cmidrule{7-7} 
     & Time per step & 1.0998 & 0.9268 & 1.5067 & 2.2270 & 2.6899\tabularnewline
    \midrule
    \midrule 
    \multirow{3}{*}{20} & Filter RMSE & 0.1541\textpm 0.0016 & 0.1092\textpm 0.0014 & 0.1355\textpm 0.0012 & 0.1086\textpm 0.0004 & \textbf{0.1082\textpm 0.0003}\tabularnewline
    \cmidrule{2-7} \cmidrule{3-7} \cmidrule{4-7} \cmidrule{5-7} \cmidrule{6-7} \cmidrule{7-7} 
     & ELBO (nats) & - & - & -928.80\textpm 10.463 & -393.68\textpm 3.9053 & \textbf{-340.36\textpm 3.9730}\tabularnewline
    \cmidrule{2-7} \cmidrule{3-7} \cmidrule{4-7} \cmidrule{5-7} \cmidrule{6-7} \cmidrule{7-7} 
     & Time per step & 5.1879 & 3.8932 & 2.3587 & 2.7000 & 3.5935\tabularnewline
    \midrule
    \midrule 
    \multirow{3}{*}{100} & Filter RMSE & 0.1571\textpm 0.0017 & 0.2493\textpm 0.0122 & 0.1239\textpm 0.0006 & 0.1070\textpm 0.0001 & \textbf{0.1068\textpm 0.0001}\tabularnewline
    \cmidrule{2-7} \cmidrule{3-7} \cmidrule{4-7} \cmidrule{5-7} \cmidrule{6-7} \cmidrule{7-7} 
     & ELBO (nats) & - & - & -4247.9\textpm 20.905 & -2069.7\textpm 11.814 & \textbf{-1794.7\textpm 5.4173}\tabularnewline
    \cmidrule{2-7} \cmidrule{3-7} \cmidrule{4-7} \cmidrule{5-7} \cmidrule{6-7} \cmidrule{7-7} 
     & Time per step & 6.4546 & 4.6184 & 3.2697 & 4.5539 & 5.9263\tabularnewline
    \bottomrule
    
    \end{tabular}
\end{table}

\subsection{Sequential Variational Auto-Encoder} \label{sec:SeqVAE}
We demonstrate the scalability of our method on a sequential VAE application. In this problem, an agent observes a long sequence of frames that could, for example, come from a robot traversing a new environment. The frames are encoded into a latent representation using a pre-trained decoder. The agent must then learn online the transition dynamics within this latent space using the single stream of input images. The SSM is defined by 
\begin{align}
    &f_\theta(x_t | x_{t-1}) = \mathcal{N}(x_t; \text{NN}^f_\theta(x_{t-1}), U), \quad &&g(y_t|x_t) = \mathcal{N}(y_t; \text{NN}^g(x_t), V),
\end{align}
where $d_x=32$, $\text{NN}_\theta^f$ is a residual MLP and $\text{NN}^g$ a convolutional neural network. $\text{NN}_\theta^f$ is learned online whilst $\text{NN}^g$ is fixed and is pre-trained on unordered images from similar environments using the standard VAE objective \citep{kingmaVAE}.
We perform this experiment on a video sequence from a DeepMind Lab environment \citep{beattie2016deepmind} (GNU GPL license). We use the same $q_t^{\phi_t}$ parameterization as for the CRNN but with a 2 hidden layer MLP with 64 neurons. KRR is used to learn $\hat{T}_t$ whereas we use an MLP for learning $\hat{S}_t$. We found that MLPs scale better than KRR as $d_\theta$ is high. Our online algorithm is run on a sequence of 4000 images after which we probe the quality of the learned $\text{NN}_\theta^f$. The results are shown in Figure \ref{fig:dmlab_rollout}. Before training, $\text{NN}_\theta^f$ predicts no meaningful change but after training it predicts movements the agent could realistically take. We quantify this further in Appendix \ref{sec:SeqVAEApdx} by showing that the approximate average log likelihood $\ell_t(\theta)/t$ computed using Monte Carlo increases through training, thereby confirming our method can successfully learn high-dimensional model parameters of the agent movement in a fully online fashion.
\begin{figure}
     \centering
     \begin{subfigure}[b]{1.0\textwidth}
         \centering
         \includegraphics[width=\textwidth]{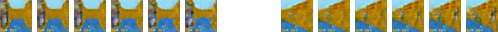}
         \caption{Before training}
     \end{subfigure}
     \begin{subfigure}[b]{1.0\textwidth}
         \centering
         \includegraphics[width=\textwidth]{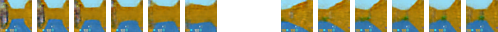}
         \caption{After training}
     \end{subfigure}
        \caption{Frames predicted by rolling out $\text{NN}_\theta^f$ from two different starting points, before and after training. Between each frame, 3 transition steps are taken. Before training, no meaningful change is predicted but after training $\text{NN}_\theta^f$ predicts plausible movements.}
        \label{fig:dmlab_rollout}
\end{figure}
\vspace{-0.15cm}
\section{Discussion} \label{sec:Discussion}

We have presented a novel online approach for variational state estimation and parameter learning. In our experiments, this methodology outperforms standard filtering approaches for high dimensional SSMs and is also able to learn online high dimensional model parameters of a neural network in a sequential VAE. However, it is not without its limitations. As with any stochastic variational inference technique, we can only obtain an accurate posterior approximation if the variational family is expressive enough and our stochastic gradient method finds a reasonable minimum. We also need to use flexible function approximators to keep the bias in our gradient estimates small. Finally, although our method is online, it can be quite computationally expensive in absolute terms as it requires solving an optimization problem and solving a regression task for each time step. 

To reduce this computational cost, one can amortize the optimization cost for $\phi$ by learning a network taking a representation of observations up to now, $y^t$, and outputting $\smash{q_t^{\phi}}$ as illustrated in Appendix \ref{sec:AppendixAmortization}. Further work will investigate ways to also amortize the cost of function regression across time through a meta-learning strategy. From a theoretical point of view, it would be useful to establish conditions under which the proposed variational filter is exponentially stable and to study the asymptotic properties of the parameter learning procedure.

\newpage
\begin{ack}
The authors are grateful to Adrien Corenflos, Desi Ivanova and James Thornton for their comments. Andrew Campbell acknowledges support from the EPSRC CDT in Modern Statistics and Statistical Machine Learning (EP/S023151/1). Arnaud Doucet is partly supported by the EPSRC grant EP/R034710/1. He also acknowledges support of the UK Defence Science and Technology Laboratory (DSTL) and EPSRC under grant EP/R013616/1. This is part of the collaboration between US DOD, UK MOD and UK EPSRC under the Multidisciplinary University Research Initiative.  
\end{ack}

\bibliographystyle{apalike}
\bibliography{references}

\newpage

\appendix
\part{Appendix}
\parttoc

The appendix is organized as follows. In Section \ref{sec:ApdxMethodologicalDetails} we cover in more detail some subtleties of our proposed methodology. In \ref{sec:ObjectiveForRecursionFitting}, we motivate our choice of objectives for fitting our recursive gradient estimators, \ref{sec:AppendixAlgorithm} gives a full description of our algorithm and \ref{sec:LinkWtihRL} details the link between the recursions used in RL and those presented here. Section \ref{sec:AppendixExperiments} gives experimental details as well as some further results for our Linear Gaussian (\ref{sec:LinearGaussianExperimentApdx}), Chaotic Recurrent Neural Network (\ref{sec:ChaotircRecurrentNeuralNetworkApdx}) and Sequential VAE (\ref{sec:SeqVAEApdx}) experiments. The proofs for all our propositions are given in Section \ref{sec:Proofs}. In Section \ref{sec:AppendixAmortization}, we discuss methods for amortizing the cost of learning $\phi_t$ over time. We present possible architecture choices in \ref{sec:ApdxAmortizaitonArchitecture}, alternative objectives in \ref{sec:ApdxAmortizationUfunction}, \ref{sec:ApdxSemiAmortized}, and notes on gradient computations in \ref{sec:ApdxAmortizationGradients}. Finally, we discuss the broader impact of our research in Section \ref{sec:BroaderImpact}.

\newpage

\section{Methodological Details} \label{sec:ApdxMethodologicalDetails}
\subsection{Objective for Recursive Fitting} \label{sec:ObjectiveForRecursionFitting}
Before specializing to our application, we first state a standard regression result. The following regression problem 
\begin{equation}
\label{eq:outerExpectationRegression}
    \underset{h}{\text{min}} \quad \E_{\rho(x,y)} \left[ \left\lVert h(x) - k(y, x)\right\rVert_2^2\right]
\end{equation}
has solution $h(x) = \E_{\rho(y|x)}\left[ k(y, x) \right]$. Therefore, we can estimate $ \E_{\rho(y|x)}\left[ k(y, x) \right]$ by minimizing the following empirical version of the $L^2$ loss (\ref{eq:outerExpectationRegression}) (or a regularized version of it)  
\begin{equation}
    \underset{h \in \mathcal{F}}{\text{min}}\quad  \frac{1}{N}\sum_{i=1}^{N} \lVert h(x^i) - k(y^i, x^i)\rVert_2^2,\quad\text{where~}x^i,y^i \overset{\textup{i.i.d.}}\sim \rho(x,y)
\end{equation}
over a flexible function class $ \mathcal{F}$. 

We note that if we make the following substitutions, then this solution exactly corresponds to the form of recursion (\ref{eq:Srecursion}) with our approximation $\hat{S}_{t+1}$ substituted for $S_{t+1}^{\theta, \phi_{1:t+1}}$:
\begin{align}
    &x = x_{t+1}, \quad \rho(x) = q_{t+1}^{\phi_{t+1}}(x_{t+1}), \quad y = x_t, \quad \rho(y|x) = q_{t+1}^{\phi_{t+1}}(x_t|x_{t+1}),\\
    &h(x) = \hat{S}_{t+1}(x_{t+1}), \quad k(y,x) = \hat{S}_t(x_t) + s_{t+1}^\theta(x_t, x_{t+1}).
\end{align}

Similarly for fitting $\hat{T}_{t+1}$ we can make the following substitutions such that the regression solution corresponds to (\ref{eq:recursionT}).
\begin{align}
    &x = x_{t+1}, \quad \rho(x) = q_{t+1}^{\phi_{t+1}}(x_{t+1}), \quad y = \epsilon_t, \quad \rho(y|x) = \lambda(\epsilon_t), \quad h(x) = \hat{T}_{t+1}(x_{t+1}),\\
    & k(y,x) = \hat{T}_t(x_t(\phi_{t+1}; \epsilon_t, x_{t+1})) \frac{\partial x_t(\phi_{t+1}; \epsilon_t, x_{t+1})}{\partial x_{t+1}} + \nabla_{x_{t+1}} r_{t+1}^{\theta, \phi_{t:t+1}}(x_t(\phi_{t+1}; \epsilon_t, x_{t+1}), x_{t+1}).
\end{align}

We note that the choice of distribution $\rho(x)$ is technically arbitrary, however, in practice, it will decide the region of space that our function approximation is most accurate. Therefore, we would like it to be close to the `test time' distribution - the distribution of points that the approximators are evaluated at during the next time step.
For $\hat{T}_{t+1}$, the test time distribution depends on $\phi_{t+2}$ which changes during optimization. This gives a series of test time distributions given by
\begin{equation}
    \label{eq:qtp2back1smoothing}
    \int q_{t+2}^{\phi_{t+2}}(x_{t+2}) q_{t+2}^{\phi_{t+2}}(x_{t+1}|x_{t+2}) dx_{t+2}.
\end{equation}
Assuming an accurate variational approximation, this will approach the following one-step smoothing distribution: $p_\theta(x_{t+1} | y^{t+2})$. Our best approximation to this available at time $t+1$ is $q_{t+1}^{\phi_{t+1}}(x_{t+1})$ which approximates $p_\theta(x_{t+1} | y^{t+1})$. $\hat{S}_{t+1}$ is evaluated at the end of $\phi_{t+2}$ optimization so it has only two test time distributions, (\ref{eq:qtp2back1smoothing}) with the final value of $\phi_{t+2}$ and $q_{t+1}^{\phi_{t+1}}(x_{t+1})$. Therefore, for both $\hat{T}_{t+1}$ and $\hat{S}_{t+1}$, we set $\rho(x) = q_{t+1}^{\phi_{t+1}}(x_{t+1})$. We found this to perform well in our experiments and so strategies to mitigate the distribution shift between training and test time were not needed. However, if necessary, we could artificially inflate the entropy of the training distribution to cover a wider region of the input space.

\subsection{Algorithm} \label{sec:AppendixAlgorithm}
The full detailed description of Algorithm \ref{alg:TheAlgorithm} is shown in Algorithm \ref{alg:DetailedAlgorithm}.

Here, we have assumed the variational family is Gaussian and so we detail the reparameterization trick. We parameterize the standard deviation through $\log \sigma_t$ to ensure $\sigma_t>0$.

We present both options for regression, neural networks and KRR. The exposition for KRR is as in \cite{li2020amortised}. When we use neural networks, we use $\varpi_t$ to represent the parameters of the $\hat{T}_t$ network and $\psi_t$ to represent the parameters of the $\hat{S}_t$ network. We describe L2 regression for the neural networks in Algorithm \ref{alg:DetailedAlgorithm} although other losses are possible. KRR is based on a regularized L2 loss with regularization parameter $\lambda > 0$. It requires a kernel $k(x^i_t, x^j_t)$ that takes two input vectors and outputs a scalar similarity between them. In our experiments we use the Mat\'ern kernel.

\LinesNumbered
\begin{algorithm}[t]
\SetAlgoNoLine
\DontPrintSemicolon
{\small \For{$t = 1, \dots, T$} {
Initialize $\phi_t$ e.g. $\phi_t \leftarrow \phi_{t-1}$\;
{\small \color{gray}\tcc{Update $\phi_t$ using $M$ stochastic gradient steps}}
    \For{$m=1, \dots, M$}{
        {\small \color{gray}\tcc{Split out variational parameters}}
        $[\mu_t, \log \sigma_t, \tilde{\phi}_t] \leftarrow \phi_t$\; \label{algLine:SampleXstart}
        {\small \color{gray}\tcc{Sample $x_{t-1}$ and $x_t$ using the reparameterization trick}}
        Sample $\epsilon_t^i \sim \mathcal{N}(0, \mathbb{I}_{d_x})$ \,\, for $i=1, \dots, N$\;
        $x_t^i = \mu_t + \exp(\log \sigma_t) \epsilon_t^i$ \,\, for $i=1, \dots, N$ \quad {\small \color{gray}\tcc{Element wise}}
        $[\tilde{\mu}_{t}^i, \log \tilde{\sigma}_{t}^i] = Q^{\tilde{\phi}_t}(x_t^i)$ \,\, for $i=1, \dots, N$\; 
        {\small \color{gray}\tcc{$Q^{\tilde{\phi}_t}$ is a function giving $q_t^{\phi_t}(x_{t-1}|x_t)$ statistics e.g. MLP}}
        Sample $\tilde{\epsilon}_t^i \sim \mathcal{N}(0, \mathbb{I}_{d_x})$ \,\, for $i=1, \dots, N$\;
        $x_{t-1}^i = \tilde{\mu}_t + \exp(\log \tilde{\sigma}_t) \tilde{\epsilon}_t^i$ \,\, for $i=1, \dots, N$ \quad {\small \color{gray}\tcc{Element wise}} \label{algLine:SampleXend}

        $\phi_t \leftarrow \phi_t + \gamma_m \frac{1}{N} \sum_{i=1}^N \{\hat{T}_{t-1}(x_{t-1}^i) \frac{\textup{d} x_{t-1}^i}{\textup{d} \phi_{t}} + \nabla_{\phi_t} r_t(x_{t-1}^i, x_t^i) \}$\;
    }
\BlankLine\;
{\small \color{gray} \tcc{Update $\hat{T}_t(x_t)$ and $\hat{S}_t(x_t)$ as in Section \ref{sec:theta_gradients}}}
{\small \color{gray} \tcc{Generate training datasets}}
Sample $x_{t-1}^i, x_t^i \sim q_t^{\phi_t}(x_{t-1}, x_t)$ as lines [\ref{algLine:SampleXstart}] to [\ref{algLine:SampleXend}] \,\, for $i = 1, \dots, P$\;
$\mathcal{D}_T = \left\{x_t^i,  \text{T-target}^i \right\}_{i=1}^P$ with $\text{T-target}^i = \hat{T}_{t-1}(x_{t-1}^i) \frac{\partial x_{t-1}^i}{\partial x_t^i} + \nabla_{x_t} r_t(x_{t-1}^i, x_t^i)$\;
$\mathcal{D}_S = \left\{x_t^i, \text{S-target}^i \right\}_{i=1}^P$ with $\text{S-target}^i = \hat{S}_{t-1}(x_{t-1}^i) + s_t^{\theta_{t-1}}(x_{t-1}^i, x_t^i)$\;

{\small \color{gray} \tcc{Update function approximators}}
\uIf{Regression using Neural Nets}{
\For {$j=1, \dots, J$}{
    $\mathcal{I} \leftarrow \text{minibatch sample from } \{1, \dots, P\}$\; 
    $\varpi_t \leftarrow \varpi_t + \gamma_j \nabla_{\varpi_t} \frac{1}{|\mathcal{I}|} \sum_{i \in \mathcal{I}} \| \hat{T}_t^{\varpi_t}(x_t^i) - \text{T-target}^i \|_2^2$\;
    $\psi_t \leftarrow \psi_t + \gamma_j \nabla_{\psi_t} \frac{1}{|\mathcal{I}|} \sum_{i \in \mathcal{I}} \| \hat{S}_t^{\psi_t}(x_t^i) - \text{S-target}^i \|_2^2$\;
}
}
\ElseIf{Regression using KRR}{
Let $\hat{T}_t(x_t^*) = R_{\mathcal{D}_T}(K_{\mathcal{D}_T} + P\lambda \mathbb{I}_P)^{-1} k^*_{\mathcal{D}_T}$\;
Let $\hat{S}_t(x_t^*) = R_{\mathcal{D}_S}(K_{\mathcal{D}_S} + P\lambda \mathbb{I}_P)^{-1} k^*_{\mathcal{D}_S}$\;
with\;
\quad $R_{\mathcal{D}} = [ \text{target}^1, \dots, \text{target}^P ] \in \mathbb{R}^{d_x \times P}$\;
\quad $K_{\mathcal{D}} \in \mathbb{R}^{P \times P}, \,\, (K_{\mathcal{D}})_{ij} = k(x^i_t, x^j_t)$\;
\quad $k^*_{\mathcal{D}} \in \mathbb{R}^P, \,\, (k^*_{\mathcal{D}})_i = k(x^i_t, x_t^*)$

}

\BlankLine\;
{\small \color{gray} \tcc{Update $\theta$ using a stochastic gradient step}}
Sample $x_{t-1}^i, x_t^i \sim q^{\phi_t}_t(x_{t-1}, x_t), \quad \tilde{x}_{t-1}^i \sim q^{\phi_{t-1}}_{t-1}(x_{t-1})$ for $i=1,...,N$\;
$\theta_t \leftarrow \theta_{t-1} + \eta_t \frac{1}{N} \sum_{i=1}^N \{ \hat{S}_{t-1}(x_{t-1}^i) + s_t^{\theta_{t-1}}(x_{t-1}^i, x_t^i)  - \hat{S}_{t-1}(\tilde{x}_{t-1}^i) \}$\;
}
}
 \caption{Online Variational Filtering and Parameter Learning - Full Algorithm Description}
\label{alg:DetailedAlgorithm}
\end{algorithm}

\subsection{Link with Reinforcement Learning}
\label{sec:LinkWtihRL}
The forward recursions in our method bear some similarity to the Bellman recursion present in RL. This is due to both relying on dynamic programming. We make explicit the relationship between our gradient recursions and the Bellman recursion in this section.

We first detail the standard RL framework and its Bellman recursion. The total expected reward we would like to optimize is 
\begin{equation}
    J(\phi) = \mathbb{E}_{\tau \sim p_\phi} \left[ \sum_{t=1}^T r(s_t, a_t)\right],
\end{equation}
where $s_t$ is the state at time $t$, $a_t$ is the action taken at time $t$ and $r(s_t, a_t)$ is the reward for being in state $s_t$ and taking action $a_t$. This expectation is taken with respect to the trajectory distribution which is dependent on the policy $\pi_\phi$
\begin{equation}
    p_\phi(\tau) = P(s_1) \pi_\phi(a_1|s_1) \prod_{t=2}^T P(s_t|s_{t-1}, a_{t-1}) \pi_\phi(a_t|s_t).
    \label{traj}
\end{equation}
The value function is then defined as the expected sum of future rewards when starting in state $s_t$ under policy $\pi_\phi$
\begin{equation}
    V^{\text{RL}}_t(s_t) := \mathbb{E} \left[ \sum_{k=t}^T r(s_k, a_k) \right].
\end{equation}

This value function satisfies the following Bellman recursion
\begin{equation}
    V^{\text{RL}}_t(s_t) = \mathbb{E}_{s_{t+1}, a_t \sim P(s_{t+1}|s_t, a_t) \pi_\phi(a_t|s_t)} \left[ r(s_t, a_t) + V^{\text{RL}}_{t+1}(s_{t+1})\right].
\end{equation}
The total expected reward $J(\phi)$ is then just the expected value of $V^{\text{RL}}_1$ taken with respect to the first state distribution
\begin{equation}
    J(\phi) = \mathbb{E}_{s_1 \sim P(s_1)} \left[ V^{\text{RL}}_1(s_1)\right].
\end{equation}
For our application, we would like to instead have a \emph{forward} recursion. A natural forward recursion appears when we consider an \emph{anti-causal} graphical model for RL, where $s_t$ depends on $s_{t+1}$ and $a_{t+1}$; i.e. we consider the following \emph{reverse}-time decomposition of the trajectory distribution
\begin{equation}
    p_\phi(\tau) = P(s_T) \pi_\phi(a_T|s_T) \prod_{t=T-1}^{1} P(s_t|s_{t+1}, a_{t+1}) \pi_\phi(a_t|s_t).
\end{equation}
We define a new value function which is the sum of previous rewards
\begin{equation}
    V_{t}(s_{t}) := \mathbb{E} \left[ \sum_{k=1}^{t} r(s_k, a_k)\right].
\end{equation}
It follows a corresponding forward Bellman-type recursion
\begin{equation}
    V_{t+1}(s_{t+1}) = \mathbb{E}_{\pi_\phi(a_{t+1}|s_{t+1}) P(s_t| s_{t+1}, a_{t+1})} \left[ r(s_{t+1}, a_{t+1}) + V_t(s_t)\right].
\end{equation}
$J(\phi)$ is then now the expected value of $V_T$ taken with respect to the final state distribution
\begin{equation}
    J(\phi) = \mathbb{E}_{s_T \sim P(s_T)}\left[ V_T(s_T)\right].
\end{equation}
This forward Bellman recursion is non-standard in the literature but is useful when we adapt it for our application. We define $s_t = x_t$, $a_t = x_{t-1} \sim q^{\phi}_{t}(x_{t-1}|x_{t})$ and $P(s_t|s_{t+1}, a_{t+1}) = \delta (s_t = a_{t+1})$. The `reward' is defined as
\begin{equation}
    r(s_t, a_t) = r(x_t, x_{t-1}) =  \log \frac{f(x_t|x_{t-1}) g(y_t|x_t) q_{t-1}^{\phi_{t-1}}(x_{t-1})}{q_t^{\phi_t}(x_t) q_t^{\phi_t}(x_{t-1}|x_t)},
\end{equation}
\begin{equation}
    r(s_1, a_1) = r(x_1, x_0) =  \log \frac{\mu(x_1) g(y_1|x_1)}{q_1^{\phi_1}(x_1)},
\end{equation}
where we have suppressed $\theta$ from the notation for conciseness.
Note $a_1 = x_0$ has no meaning here. The `policy' is defined as the backward kernel
\begin{equation}
    \pi_\phi(a_t|s_t) = q_t^{\phi_t}(x_{t-1}|x_t).
\end{equation}
With these definitions, the trajectory distribution is
\begin{equation}
    p_\phi(\tau) = q_T^{\phi_T}(x_T)\prod_{t=T}^1 q_t^{\phi_t}(x_{t-1}|x_t).
\end{equation}
(Since $x_0$ has no meaning in our application, the final $q_1^{\phi_1}(x_0|x_1)$ distribution that appears has no significance.)
With this formulation, the sum of rewards now corresponds to the ELBO which we would like to maximize with respect to $\phi$
\begin{equation}
    \mathcal{L}_{T} = \mathbb{E}_{p_\phi(\tau)} \left[ \sum_{t=1}^T r(s_t, a_t)\right].
\end{equation}
Just as in our anti-causal RL example, this can be broken down into a value function that summarizes previous rewards
\begin{equation}
    V_{t+1}(x_{t+1}) = \mathbb{E}\left[ \sum_{k=1}^{t+1} r(s_k, a_k)\right] = \mathbb{E}_{q_{t+1}^{\phi_{1:t+1}}(x_{1:t}|x_{t+1})} \left[ \log \frac{p_{\theta}(x_{1:t+1}, y^{t+1})}{q_{t+1}^{\phi_{1:t+1}}(x_{1:t+1})}\right],
\end{equation}
\begin{equation}
    \mathcal{L}_{T} = \mathbb{E}_{q_T^{\phi_T}(x_T)}\left[ V_T(x_T)\right].
\end{equation}
This follows a forward Bellman recursion (equation (\ref{eq:recursionV}) in the main text).
\begin{equation}
    V_{t+1}(x_{t+1}) = \mathbb{E}_{q_{t+1}^{\phi_{t+1}}(x_t|x_{t+1})} \left[ \log \frac{f(x_{t+1}|x_t)g(y_{t+1}|x_{t+1})q_t^{\phi_t}(x_t)}{q_{t+1}^{\phi_{t+1}}(x_{t+1}) q_{t+1}^{\phi_{t+1}}(x_t|x_{t+1})} + V_t(x_t)\right].
\end{equation}
Since we would like to optimize the ELBO rather than just evaluate it, we do not make use of $V_{t}(x_{t})$ directly. We instead differentiate this forward in time Bellman recursion to obtain our gradient recursions. To obtain equation (\ref{eq:Srecursion}) in the paper we differentiate with respect to $\theta$. To obtain equation (\ref{eq:recursionT}) we differentiate with respect to $x_{t}$, we then use $\frac{\partial}{\partial x_{t}} V_{t}(x_{t})$ to get an equation for $\nabla_{\phi_{t+1}} V_{t+1}(x_{t+1})$.\\

Our approach here is complementary to that of \citep{levine2018reinforcement, fellows2019virel} but differs in the fact we focus on forward in time recursions allowing an online optimization of the ELBO. \cite{levine2018reinforcement} and subsequent work focus on fitting RL into a probabilistic context whereas we take ideas from RL (recursive function estimation) to enable online inference. We note that \cite{weber2015reinforced} also define suitable rewards to fit probabilistic inference into an RL framework but again only focus on backward Bellman recursions.\\

\section{Experiment Details} \label{sec:AppendixExperiments}
\subsection{Linear Gaussian Experiment} \label{sec:LinearGaussianExperimentApdx}
For both experiments, we randomly initialize $F$ and $G$ to have eigenvalues between 0.5 and 1.0. When learning $\phi$ we set the diagonals of $U$ and $V$ to be $0.1^2$. We use a learning rate of 0.01 over 5000 iterations for each time step. We decay the learning rate by 0.999 at every inner training step. For the initial time point we use a learning rate of 0.1 with the same number of iterations and decay because the $\phi$ parameters start far away from the optimum but for the proceeding time points we initialize at the previous solution hence they are already close to the local optimum. We represent $\hat{T}_t$ using KRR, and use 500 data points to perform the fitting at each time step. We set the regularization parameter to 0.1. We use an RBF kernel with a bandwidth learned by minimizing mean squared error on an additional validation dataset. 

For learning $\phi$ and $\theta$ jointly, we set the diagonals of $V$ to be $0.25^2$. We compare different training methods against the RMLE baseline under the same settings. To learn $\phi$, we use a learning rate of 0.01 over 500 iterations for each time step with a learning rate decay of 0.991. We represent $\hat{T}_t$ using KRR, and use 512 data points to perform the fitting at each time step. We set the regularization parameter to 0.01. To learn $\theta$, we use a learning rate of 1e-2 and a Robbins-Monro type learning rate decay. We represent $\hat{S}_t$ using KRR with 1000 data points and regularization parameter 1e-4. The experiments were run on an internal CPU cluster with Intel Xeon E5-2690 v4 CPU.

\subsection{Chaotic Recurrent Neural Network} \label{sec:ChaotircRecurrentNeuralNetworkApdx}
We follow closely \cite{zhao2020streaming} using the same parameter settings for data generation. We use 1 million particles for EnKF and BPF for dimension $d_{x}=5$ and $20$, and 250000 particles for dimension $d_{x}=100$ in order to match the computation cost. For variational methods, we train for 500 iterations at each timestep with minibatch size 10 and learning rate 1e-2, and we use a single-layer MLP with 100 neurons to parameterize each $q_{t}^{\phi_t}(x_{t-1}|x_{t})$ for AELBO-2 and our method. The function $\hat{T}_t$ is represented using KRR with 100 samples for $d_{x}=5$ and 250 samples for $d_{x}=20$ and $100$. The regularization strength is fixed to be 0.1 while the kernel bandwidth is learned at each timestep on a validation dataset for 25 iterations with minibatch size 10 and learning rate 1e-2. The extra time for optimizing the kernel parameter is included in the presented runtime results. The experiments were run on an internal CPU cluster with Intel Xeon E5-2690 v4 CPU.

For $d_x=5$, we further confirm the gain in ELBO by comparing the variational means of $q_{t}^{\phi_t}(x_{t})$ and $q_{t}^{\phi_t}(x_{t-1})=\int q_{t}^{\phi_t}(x_{t})q_{t}^{\phi_t}(x_{t-1}|x_{t})\textup{d}x_t$ against the `ground truth' posterior means of $p(x_{t}|y^{t})$ and $p(x_{t-1}|y^{t})$ computed using PF with 10 million particles. As shown in Table \ref{tab:crnn_posterior_rmse}, our method attains a significantly lower error in terms of both metrics.\\

\begin{table}
\centering
\caption{Root Mean Squared Error between the `ground truth' posterior mean
and variational mean estimates for the filtering distribution $p(x_{t}|y^{t})$
and one-step smoothing distribution $p(x_{t-1}|y^{t})$ over 10 runs. }
\vspace{1em}
\begin{tabular}{ccccc}
\toprule 
$d_{x}$ &  & AELBO-1 & AELBO-2 & Ours\tabularnewline
\midrule
\midrule 
\multirow{2}{*}{5} & Filter mean RMSE & 0.0644\textpm 0.0037 & 0.0155\textpm 0.0006 & \textbf{0.0128\textpm 0.0007}\tabularnewline
\cmidrule{2-5} \cmidrule{3-5} \cmidrule{4-5} \cmidrule{5-5} 
 & 1-step mean RMSE & - & 0.0241\textpm 0.0008 & \textbf{0.0202\textpm 0.0009}\tabularnewline
\bottomrule
\label{tab:crnn_posterior_rmse}
\end{tabular}

\vspace{-0.2cm}
\end{table}

For our comparison with the offline ELBO, we verify our conclusion holds by considering a range of seeds and examining all dimensions of the variational joint. We plot the two joint variational distributions alongside the true hidden state in Figure \ref{fig:fullELBOcomparison} for 3 seeds and 5 dimensions. We see that in all cases the joint variational distributions trained with our online ELBO and the offline ELBO match very closely.

We also note that for longer time intervals, using the offline ELBO objective results in training instabilities due to the necessity of rolling out the entire backward joint variational distribution. To train for 100 time steps, a `warm start' was needed whereby we first train for a small number of training iterations on each of a series of intermediary objectives. If the total number of time steps is $t$, then there are $t$ intermediary objectives each being an offline ELBO with $\tau$ terms for $\tau=1, 2, \dots, t$. After the warm start, many gradient updates are taken using the offline ELBO corresponding to the full $t$ steps.\\

\begin{figure}
    \centering
    \includegraphics[width=\textwidth]{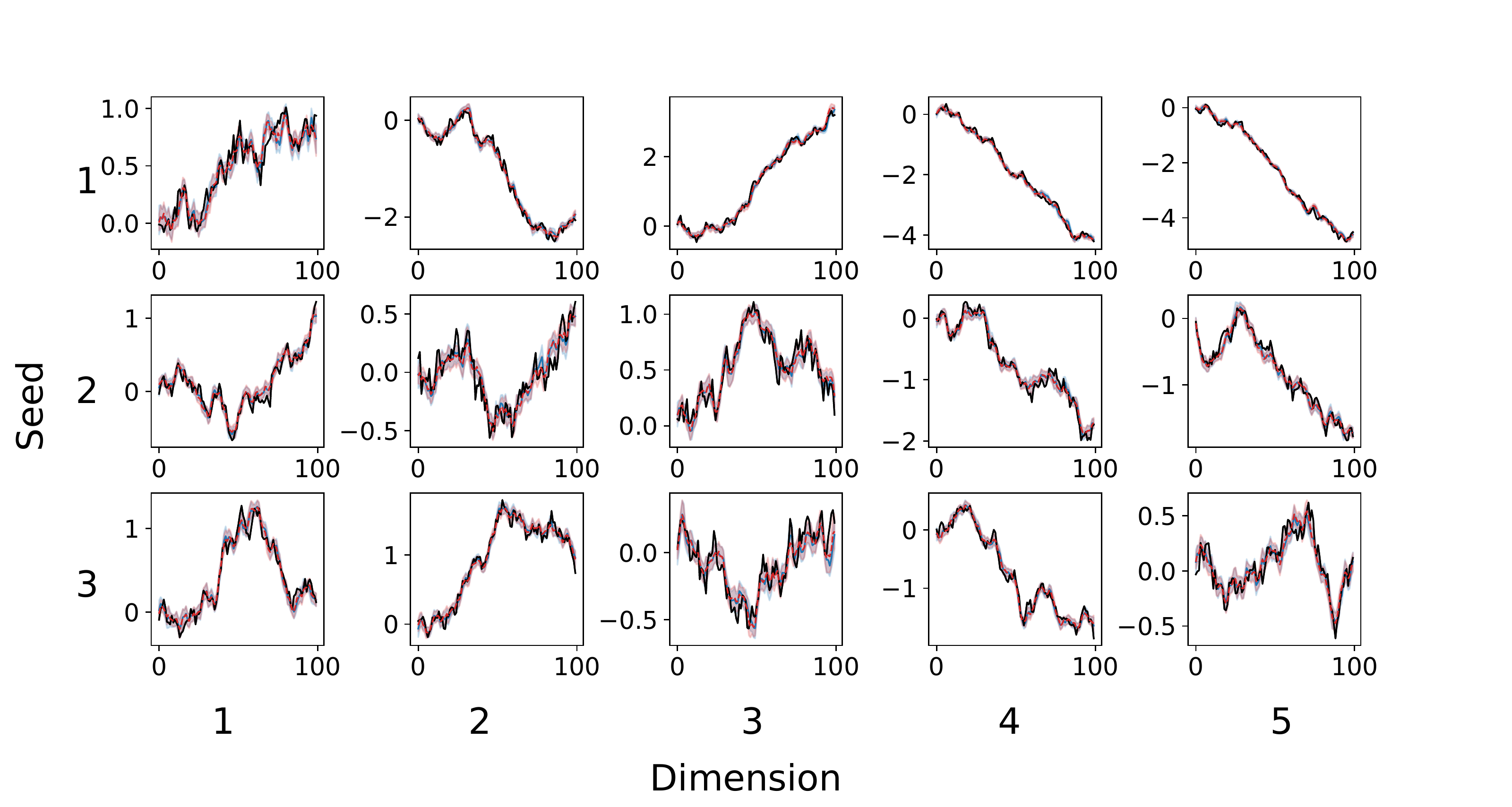}
    \caption{Comparison between joint variational distributions trained {\color{red} offline} (red dashed) versus {\color{blue} online} (blue full). The data is generated as in the Chaotic Recurrent Neural Network example, for 10 time steps and is shown in black. Colored lines show variational means through time whilst the shaded regions represent $\pm1$ std. The row of a plot corresponds to the seed used to generate the data whilst the column corresponds to the dimension in this 5 dimensional tracking example.}
    
    \label{fig:fullELBOcomparison}
\end{figure}

\subsection{Sequential Variational Auto-Encoder Experiment} \label{sec:SeqVAEApdx}
A video demonstration of the sequential VAE model is available at \url{https://github.com/andrew-cr/online_var_fil}. 
\subsubsection{Experimental Details}
We parameterize $\text{NN}_\theta^f$ as a residual MLP which consists of 4 stacked layers of the form $f(x) = x + s\text{MLP}(x)$ where $\text{MLP}(x)$ is an MLP with a single hidden layer of hidden dimension 32 and $s$ is a learned scaling parameter. We parameterize $\text{NN}^g$ as a convolutional neural network, using the architecture suggested by \footnote{\url{https://github.com/deepmind/sonnet/blob/v2/examples/vqvae\_example.ipynb}} and implemented in PyTorch in \footnote{\url{https://github.com/karpathy/deep-vector-quantization} MIT License}. Specifically, the architecture consists of the following layers: 
\begin{itemize}
    \item Convolutional layer, 128 outputs channels, kernel size of 3 and padding of 1
    \item ReLU activation
    \item Two residual blocks consisting of
    \begin{itemize}
        \item Convolutional layer, 32 outputs channels, kernel size of 3, padding of 1
        \item ReLU activation
        \item Convolutional layer, 128 output channels, kernel size of 1, padding of 0
        \item Residual connection
    \end{itemize}
    \item Transposed convolution, 64 output channels, kernel size of 4, stride of 2 and padding of 1
    \item ReLU activation
    \item Transposed convolution, 3 output channels, kernel size of 4, stride of 2 and padding of 1
\end{itemize}
To pre-train the decoder we use the encoder architecture from the same sources and train using the standard minibatch ELBO objective to convergence.\\

We use KRR to represent $\hat{T}_t$. We use a KRR regularization parameter of $0.1$ and use an RBF kernel with learned bandwidth on a validation dataset. We update $\hat{T}_t$ using a dataset of size 512. For $\hat{S}_t$ we use a two hidden layer MLP with ReLU activations with hidden layer dimensions, 256 and 1024. The $\theta$ dimension is 8452. The MLP has an output dimension of 8453, the first 8452 outputs give the gradient direction and the last gives the log magnitude of the gradient. The MLP is trained on the regression dataset using a combination of a direction and magnitude loss. The direction loss is the negative cosine similarity whilst the magnitude loss is an MSE loss in log magnitude space. The two losses are then combined with equal weighting. This separation of the gradient into a magnitude and direction is similar to the gradient pre-processing described in \cite{learningtolearnbygradientdescentbygradientdescent}. The dataset size used for regression is 1024. We take minibatches of size 32 randomly sampled from this dataset and take 128 gradient steps on the $\hat{S}_t$ weights for each time step with a learning rate of 0.001.\\

We use mean field $q_t^{\phi_t}(x_t)$ and $q_t^{\phi_t}(x_{t-1}|x_t)$. The mean vector for $q_t^{\phi_t}(x_{t-1}|x_t)$ is given by an MLP with input $x_t$, and two hidden layers of dimension 64. The log standard deviation for $q_t^{\phi_t}(x_{t-1}|x_t)$ is learned directly and does not depend on $x_t$.\\

We set $U = V = 0.1\mathbb{I}$. We use a learning rate for $\theta$ updates of 0.001. We run 200 iterations of inner $\phi_t$ optimization at each time step, with a learning rate of $0.03$ for $q_t^{\phi_t}(x_t)$ statistics and $0.003$ for $q_t^{\phi_t}(x_{t-1}|x_t)$ weights. These experiments were run on a single RTX 3090 GPU.
\subsubsection{Average log likelihood}

\begin{figure}[t]
    \centering
    \includegraphics[width=0.5\textwidth]{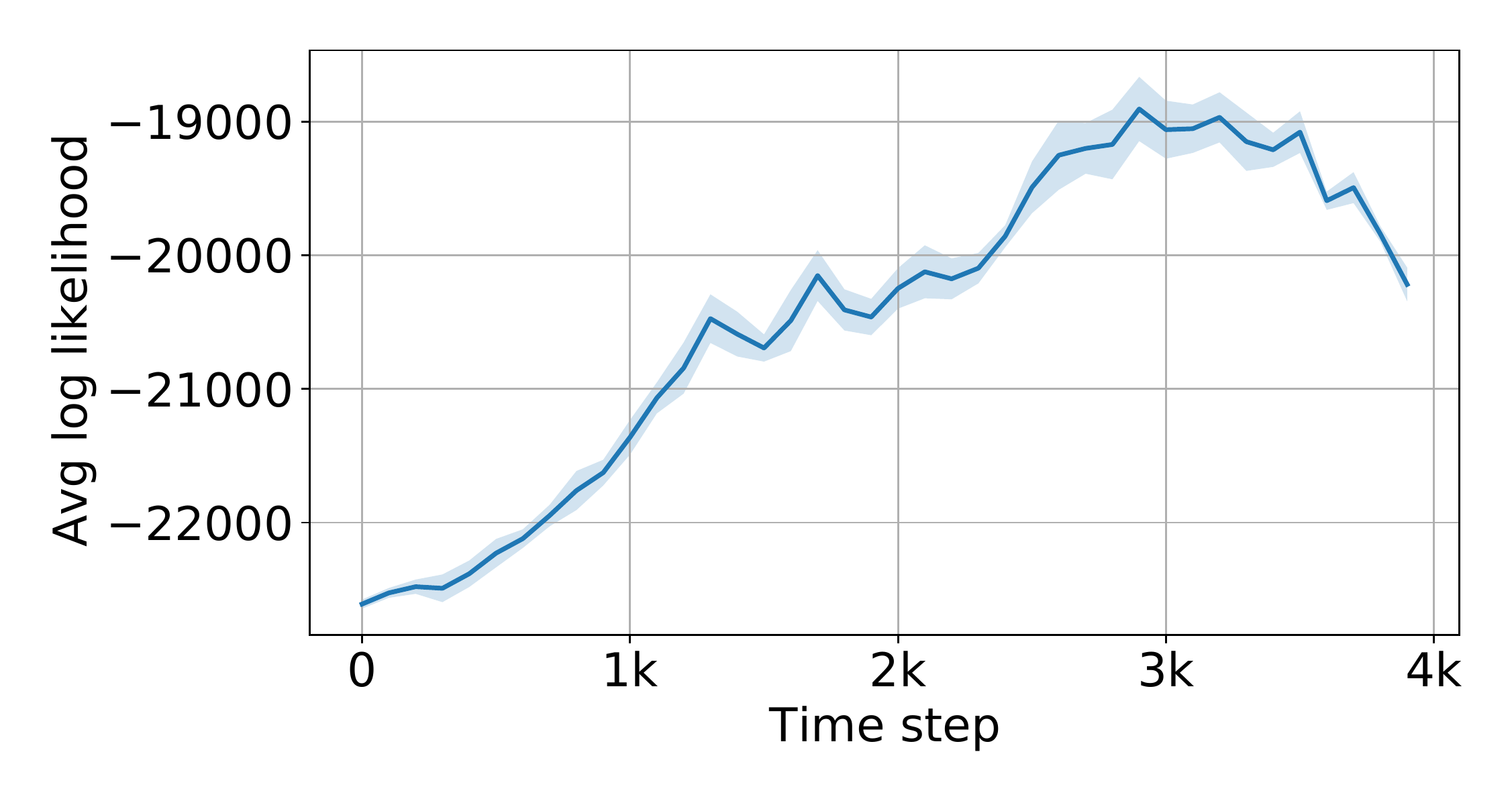}
    \caption{$\hat{\ell}_t^q(\theta_k) / t$ for $k=[1, \dots, t]$. Solid line is mean over three different seeds using the same dataset, transparent area is $\pm$ one standard deviation.}
    \label{fig:seqVAE_avg_log_likelihoods}
\end{figure}

To quantify the quality of the learned transition function $\text{NN}_\theta^f$ we can compute the approximate average log likelihood $\ell_t(\theta)/t$ using values of $\theta$ from different stages of training. The environment is highly non-stationary on the timescale considered because the dataset consists of video frames from an agent exploring a maze giving transitions that are diverse and not repeated. We therefore do not expect $\ell_t(\theta)/t$ to converge to a constant but its general trajectory is useful to quantify the agreement between the model and the observations. $\ell_t(\theta)/t$ is defined as
\begin{equation}
    \frac{\ell_t(\theta)}{t} = \frac{1}{t} \sum_{k=1}^t \log p_\theta(y_k|y^{k-1}) = \frac{1}{t} \sum_{k=1}^t \log \int p_\theta(x_{k-1}|y^{k-1}) f_\theta(x_k | x_{k-1}) g(y_k | x_k) \textrm{d}x_{k} \textrm{d}x_{k-1}.
\end{equation}
We approximate this quantity using the learned filtering distributions
\begin{equation}
    \frac{\ell_t^q(\theta)}{t} = \frac{1}{t} \sum_{k=1}^t \log \int q_{k-1}^{\phi_{k-1}}(x_{k-1}) f_\theta(x_k|x_{k-1}) g(y_k|x_k) \textrm{d}x_k \textrm{d}x_{k-1}
\end{equation}
which we can estimate through Monte Carlo
\begin{equation}
    \frac{\hat{\ell}_t^q(\theta)}{t} = \frac{1}{t} \sum_{k=1}^t \log \left\{ \sum_{i=1}^N g(y_k | x_k^n) \right\} \quad x_{k-1}^n, x_k^n \sim q_{k-1}^{\phi_{k-1}}(x_{k-1}) f_\theta(x_k | x_{k-1}).
\end{equation}
We note that $\hat{\ell}_t^q / t$ depends on $\phi_{1:t}$ so can only be computed at the end of the run. To monitor progress of $\theta$, we compute $\hat{\ell}_t^q(\theta_k) / t$ for $k = [1, \dots, t]$ where $\theta_k$ is the value of the model parameters during training at time step $k$. We plot the results in Figure \ref{fig:seqVAE_avg_log_likelihoods}. We find that indeed $\hat{\ell}_t^q(\theta_k) / t$ generally increases through training showing that our method can learn high dimensional model parameters. We conjecture that the few decreases in $\hat{\ell}_t^q(\theta_k) / t$ are due to online nature of the algorithm, if the current temporally local state of the system involves transitions that are not represented throughout the training sequence then updates in this region of time will cause an overall decrease in log likelihood when aggregated over the entire sequence. However, in the limit as $t\rightarrow \infty$ a recursive maximum likelihood approach, upon which our method is based, is expected to reach a local maximum of $\ell_t(\theta)/t$ under regularity conditions because local transients are averaged out in the long term.

\newpage

\section{Proofs} \label{sec:Proofs}
\subsection{Proof of \Cref{prop:ELBOrecursion}}
Recall from (\ref{eq:recursionq_t}) that we have
\begin{align}
    q_{t+1}^{\phi}(x_{1:t+1})=q_{t}^{\phi}(x_{1:t})m_{t+1}^{\phi}(x_{t+1}|x_t),
\end{align}
so
\begin{align}
    \log\frac{p_{\theta}(x_{1:t+1},y^{t+1})}{q_{t+1}^{\phi}(x_{1:t+1})}&=\log\frac{p_{\theta}(x_{1:t},y^{t})}{q_{t}^{\phi}(x_{1:t})}+\log \frac{f_\theta(x_{t+1}|x_t)g_\theta(y_{t+1}|x_{t+1})}{m_{t+1}^{\phi}(x_{t+1}|x_t)}\\
    &=\log\frac{p_{\theta}(x_{1:t},y^{t})}{q_{t}^{\phi}(x_{1:t})}+r_{t+1}^{\theta,\phi}(x_{t},x_{t+1}).
\end{align}
From the definition
\begin{equation}
     V^{\theta,\phi}_{t+1}(x_{t+1})=\mathbb{E}_{q_{t+1}^{\phi}(x_{1:t}|x_{t+1})}\left[ \log\frac{p_{\theta}(x_{1:t+1},y^{t+1})}{q_{t+1}^{\phi}(x_{1:t+1})}\right],
\end{equation}
we have directly 
\begin{equation}
    \mathcal{L}_{t+1}(\theta,\phi)=\mathbb{E}_{q_{t+1}^{\phi}(x_{t+1})}\left[V^{\theta,\phi}_{t+1}(x_{t+1})\right]
\end{equation}
and, crucially, 
\begin{align}
    V^{\theta,\phi}_{t+1}(x_{t+1})&:=\mathbb{E}_{q_{t+1}^{\phi}(x_{1:t}|x_{t+1})}\left[\log\frac{p_{\theta}(x_{1:t},y^{t})}{q_{t}^{\phi}(x_{1:t})}+r_{t+1}^{\theta,\phi}(x_{t},x_{t+1})\right]\\
    &=\mathbb{E}_{q_{t+1}^{\phi}(x_{t}|x_{t+1})}\left[\mathbb{E}_{q_{t}^{\phi}(x_{1:t-1}|x_{t})}\left[\log\frac{p_{\theta}(x_{1:t},y^{t})}{q_{t}^{\phi}(x_{1:t})}\right]+ r_{t+1}^{\theta,\phi}(x_{t},x_{t+1})\right]\\
    &=\mathbb{E}_{q_{t+1}^{\phi}(x_{t}|x_{t+1})}\left[ V^{\theta,\phi}_{t}(x_{t})+ r_{t+1}^{\theta,\phi}(x_{t},x_{t+1})\right].
\end{align}

\subsection{Proof of \Cref{prop:ELBOgradtheta}}
We have by direct differentiation that
\begin{align}
    \nabla_{\theta}\mathcal{L}_{t}(\theta,\phi)&=\mathbb{E}_{q_{t}^{\phi}(x_t)}[\nabla_\theta V^{\theta,\phi}_t(x_t)]\\
    &=\mathbb{E}_{q_{t}^{\phi}(x_{1:t})}\left[\nabla_{\theta}\log p_{\theta}(x_{1:t},y^t)\right]\\
    &=\mathbb{E}_{q^{\phi}_t(x_t)}\left[S^{\theta,\phi}_t(x_t) \right]
\end{align}
where 
\begin{equation}
    S^{\theta,\phi}_t(x_t):=\mathbb{E}_{q^{\phi}_{t}(x_{1:t-1}|x_{t})}\left[\nabla_{\theta}\log p_{\theta}(x_{1:t},y^t) \right].
\end{equation}
This quantity satisfies the recursion
\begin{align}
     S^{\theta,\phi}_{t+1}(x_{t+1})&=\mathbb{E}_{q^{\phi}_{t+1}(x_{1:t}|x_{t+1})}\left[\nabla_{\theta}\log p_{\theta}(x_{1:t+1},y^{t+1}) \right]\\
     &=\mathbb{E}_{q^{\phi}_{t+1}(x_{1:t}|x_{t+1})}\left[\nabla_{\theta}\log p_{\theta}(x_{1:t},y^{t})+ s^{\theta}_{t+1}(x_t,x_{t+1}) \right]\\
     &=\mathbb{E}_{q^{\phi}_{t+1}(x_{t}|x_{t+1})}\left[\mathbb{E}_{q^{\phi}_{t}(x_{1:t-1}|x_{t})}\left[\nabla_{\theta}\log p_{\theta}(x_{1:t},y^{t})  \right]+s^{\theta}_{t+1}(x_t,x_{t+1})\right]\\
     &=\mathbb{E}_{q^{\phi}_{t+1}(x_{t}|x_{t+1})}\left[S^{\theta,\phi}_t(x_t)+s^{\theta}_{t+1}(x_t,x_{t+1})\right]
\end{align}
for 
\begin{equation}
    s^{\theta}_{t+1}(x_{t},x_{t+1}):=\nabla_\theta \log f_{\theta}(x_{t+1}|x_{t})g_\theta(y_{t+1}|x_{t+1}).
\end{equation}
\subsection{Proof of \Cref{prop:ELBOgradphi}}
We have by a direct application of the reparameterization trick that 
\begin{equation}
\nabla_{\phi_t} \mathcal{L}_{t}(\theta,\phi_{1:t}) =\mathbb{E}_{\lambda(\epsilon_t)}[\nabla_{\phi_t} V^{\theta,\phi_{1:t}}_{t}(x_t(\phi_t; \epsilon_t))],
\end{equation}
where $V^{\theta,\phi_{1:t}}_{t}(x_t):=\mathbb{E}_{q_{t}^{\phi_{1:t}}(x_{1:t-1}|x_{t})}\left[ \log\frac{p_{\theta}(x_{1:t},y^{t})}{q_{t}^{\phi_{1:t}}(x_{1:t})}\right]$. We have  
\begin{align}
    V^{\theta,\phi_{1:t+1}}_{t+1}(x_{t+1}) & = \mathbb{E}_{q_{t+1}^{\phi_{t+1}}(x_t|x_{t+1})} \left[ V^{\theta,\phi_{1:t}}_{t}(x_t)+r_{t+1}^{\theta,\phi_{t:t+1}}(x_{t},x_{t+1}) \right] \\
    & = \mathbb{E}_{\lambda(\epsilon_t)} \left[ V^{\theta,\phi_{1:t}}_{t}(x_{t}(\phi_{t+1}; \epsilon_{t}, x_{t+1}))+r_{t+1}^{\theta,\phi_{t:t+1}}(x_{t}(\phi_{t+1}; \epsilon_{t}, x_{t+1}),x_{t+1}) \right]
\end{align}

Hence, 
\begin{multline}
\nabla_{\phi_{t+1}} V^{\theta,\phi_{1:t+1}}_{t+1}(x_{t+1}(\phi_{t+1}; \epsilon_{t+1})) \\
= \mathbb{E}_{\lambda(\epsilon_t)} \left[\frac{\partial}{\partial x_t} V^{\theta,\phi_{1:t}}_{t}(x_t)\Bigr|_{x_t=x_{t}(\phi_{t+1}; \epsilon_{t}, x_{t+1}(\phi_{t+1}; \epsilon_{t+1}))} \frac{\textup{d} x_{t}(\phi_{t+1}; \epsilon_{t}, x_{t+1}(\phi_{t+1}; \epsilon_{t+1}))}{\textup{d} \phi_{t+1}} \right. \\
 + \left. \nabla_{\phi_{t+1}} r_{t+1}^{\theta,\phi_{t:t+1}}(x_{t}(\phi_{t+1}; \epsilon_{t}, x_{t+1}(\phi_{t+1}; \epsilon_{t+1})),x_{t+1}(\phi_{t+1}; \epsilon_{t+1})) \right], 
\end{multline}
where 
\begin{equation}
    \frac{\partial}{\partial x_t} V^{\theta,\phi_{1:t}}_{t}(x_t)\Bigr|_{x_t=x_{t}(\phi_{t+1}; \epsilon_{t}, x_{t+1}(\phi_{t+1}; \epsilon_{t+1}))} = T^{\theta,\phi_{1:t}}_t(x_{t}(\phi_{t+1}; \epsilon_{t}, x_{t+1}(\phi_{t+1}; \epsilon_{t+1}))).
\end{equation}

For the forward recursion, 
\begin{align}
 & \quad T^{\theta,\phi_{1:t+1}}_{t+1}(x_{t+1}) = \frac{\partial}{\partial x_{t+1}} V^{\theta,\phi_{1:t+1}}_{t+1}(x_{t+1}) \\
 & = \mathbb{E}_{\lambda(\epsilon_t)}\left[\frac{\partial}{\partial x_{t+1}} V^{\theta,\phi_{1:t}}_{t}(x_{t}(\phi_{t+1}; \epsilon_{t}, x_{t+1}))+\nabla_{x_{t+1}} r_{t+1}^{\theta,\phi_{t:t+1}}(x_{t}(\phi_{t+1}; \epsilon_{t}, x_{t+1}),x_{t+1})\right] \\
 & = \begin{multlined}[t] \mathbb{E}_{\lambda(\epsilon_t)}\left[\frac{\partial}{\partial x_{t}} V^{\theta,\phi_{1:t}}_{t}(x_{t})\Bigr|_{x_t=x_{t}(\phi_{t+1}; \epsilon_{t}, x_{t+1})} \frac{\partial x_{t}(\phi_{t+1}; \epsilon_{t}, x_{t+1})}{\partial x_{t+1}} \right. \hspace{4cm}
 \\ + \left. \nabla_{x_{t+1}} r_{t+1}^{\theta,\phi_{t:t+1}}(x_{t}(\phi_{t+1}; \epsilon_{t}, x_{t+1}),x_{t+1})\right], \end{multlined}
\end{align}
where again 
\begin{equation}
    \frac{\partial}{\partial x_{t}}V^{\theta,\phi_{1:t}}_{t}(x_{t})\Bigr|_{x_t=x_{t}(\phi_{t+1}; \epsilon_{t}, x_{t+1})} = T^{\theta,\phi_{1:t}}_t(x_{t}(\phi_{t+1}; \epsilon_{t}, x_{t+1})).
\end{equation}

Here, $\frac{\textup{d} x_{t}(\phi_{t+1}; \epsilon_{t}, x_{t+1}(\phi_{t+1}; \epsilon_{t+1}))}{\textup{d} \phi_{t+1}}$, $\nabla_{\phi_{t+1}} r_{t+1}^{\theta,\phi_{t:t+1}}(x_{t}(\phi_{t+1}; \epsilon_{t}, x_{t+1}(\phi_{t+1}; \epsilon_{t+1})),x_{t+1}(\phi_{t+1}; \epsilon_{t+1}))$, $\nabla_{x_{t+1}} r_{t+1}^{\theta,\phi_{t:t+1}}(x_{t}(\phi_{t+1}; \epsilon_{t}, x_{t+1}),x_{t+1})$ denote total derivatives w.r.t. $\phi_{t+1}$ and $x_{t+1}$, which can all be computed using the reparameterization trick.

\newpage

\section{Amortization} \label{sec:AppendixAmortization}
In the main paper, we have focused on the case where we use a new set of variational parameters $\phi_t$ at each time step. This is conceptually simple and easy to use; however, it requires a new inner optimization run for each time step. In this section, we describe methods to amortize the $\phi$ optimization over time. This entails learning an amortization network which takes in $y$-observations and gives variational distribution statistics. With this network in hand, information from previous time steps regarding the relation between the $y$-observations and the variational statistics can be re-used. This results in computational savings over a method which treats each time step in isolation.

\subsection{Architecture} \label{sec:ApdxAmortizaitonArchitecture}
The variational distributions of interest $q_t^{\phi}(x_t | y^t)$ and $q_t^\phi(x_{t-1}|y^{t-1},x_t)$ approximate $p_\theta(x_t | y^t)$ and $p_\theta(x_{t-1}|y^{t-1},x_t)$ respectively. Therefore, to produce the $q_t^{\phi}$ statistics, we use a Recurrent Neural Network (RNN) to first encode the sequence of observations, $y^t$, creating a representation $h_t$. Then a network takes $h_t$ to give $q_t^{\phi}(x_t | y^t)$ statistics and a separate network takes $h_{t-1}$ and $x_t$ to give $q_t^{\phi}(x_{t-1}|y^{t-1},x_t)$ statistics.

In the simplest version of this architecture, we can directly take $h_t$ to be the statistics of the filtering distribution $q_t^{\phi}(x_t | y^t)$. In this case, two amortization networks are learned: one forward filtering network learns to output the filtering statistics $h_t$ given $h_{t-1}$ and $y_t$; another backward smoothing network learns to approximate the backward kernel $p_\theta(x_{t-1}|y^{t-1},x_t)$ given $h_{t-1}$ and $x_t$. Note that when we fix and detach $h_{t-1}$ at each time $t$, in a way this works like supervised learning, i.e. to learn to produce the same filtering and smoothing parameters as the non-amortized case, so the objective of the non-amortized case can be directly translated here to learn amortized networks. 

\subsection{Amortized Joint ELBO Maximization Approach} \label{sec:ApdxAmortizationUfunction}

One issue with using the non-amortized objective directly is that the amortized backward smoothing kernels $q^{\phi}_t(x_{t-1}|y^{t-1},x_t)$ are not learned to jointly maximize the joint ELBO $\mathcal{L}_{T}(\theta,\phi)$ over time. Here, we give an alternative, more rigorous treatment of the amortization objective. We assume in this section that we take $h_t$ as the filtering statistics and detach $h_t$ at each time step. 

For the amortized forward filtering network, we would like to maximize the objective $-\sum_{t=1}^{T}\text{KL}\left(q_{t}^{\phi}(x_{t}|y^{t})\parallel p_{\theta}(x_{t}|y^{t})\right)$. 
Since the KL is intractable, we learn $q_{t}^{\phi}(x_{t}|y^{t})$
by maximizing $\mathcal{L}_{t}(\theta,\phi)$ at each time step $t$. This is the same as the non-amortized case, which only utilizes the functional approximator $T^{\theta,\phi}_t(x_{t})$. 

For the amortized backward smoothing network, we would like to maximize $\mathcal{L}_{T}(\theta,\phi)$ jointly over time. To do this, we can similarly optimize $\mathcal{L}_{t}(\theta,\phi) - \mathcal{L}_{t-1}(\theta,\phi)$ at each time $t$ similar to learning $\theta$. For the gradients $\nabla_{\phi} \mathcal{L}_{t}(\theta,\phi)$, we have the following result: 

\begin{proposition}\label{prop:ELBOgradphiamortized}
The ELBO gradient $\nabla_{\phi} \mathcal{L}_{t}(\theta,\phi)$ satisfies for $t\geq 1$
\begin{align}
\nabla_{\phi} \mathcal{L}_{t}(\theta,\phi)=\nabla_{\phi} \mathbb{E}_{q_{t}^{\phi}(x_t|y^t)}[V^{\theta,\phi}_{t}(x_t)]=\mathbb{E}_{\lambda(\epsilon_t)}[\nabla_{\phi} V^{\theta,\phi}_{t}(x_t(\phi; \epsilon_t))].
\end{align}

Additionally, one has
\begin{multline}\label{eq:recursionQphiamortized}
\nabla_{\phi} V^{\theta,\phi}_{t+1}(x_{t+1}(\phi; \epsilon_{t+1}))  = \mathbb{E}_{\lambda(\epsilon_t)} \left[T^{\theta,\phi}_t(x_{t}(\phi; \epsilon_{t}, x_{t+1}(\phi; \epsilon_{t+1}))) \frac{\textup{d} x_{t}(\phi; \epsilon_{t}, x_{t+1}(\phi; \epsilon_{t+1}))}{\textup{d} \phi} \right. \\ 
 + \left. U^{\theta,\phi}_t(x_{t}(\phi; \epsilon_{t}, x_{t+1}(\phi; \epsilon_{t+1}))) + \nabla_{\phi} r_{t+1}^{\theta,\phi}(x_{t}(\phi; \epsilon_{t}, x_{t+1}(\phi; \epsilon_{t+1})),x_{t+1}(\phi; \epsilon_{t+1}))\right], 
\end{multline}

where
$T^{\theta,\phi}_t(x_{t})  := \frac{\partial}{\partial x_t} V^{\theta,\phi}_{t}(x_t)$, $U^{\theta,\phi}_t(x_{t})  := \nabla_\phi V^{\theta,\phi}_{t}(x_t)$
satisfy the forward recursions  
\begin{multline}\label{eq:recursionTamortized}
T^{\theta,\phi}_{t+1}(x_{t+1}) =  \mathbb{E}_{\lambda(\epsilon_t)}\left[T^{\theta,\phi}_t(x_{t}(\phi; \epsilon_{t}, x_{t+1})) \frac{\partial x_{t}(\phi; \epsilon_{t}, x_{t+1})}{\partial x_{t+1}}\right. \\ 
+ \left.\nabla_{x_{t+1}} r_{t+1}^{\theta,\phi}(x_{t}(\phi; \epsilon_{t}, x_{t+1}),x_{t+1})\right], 
\end{multline}
\begin{multline}\label{eq:recursionUamortized}
U^{\theta,\phi}_{t+1}(x_{t+1}) =  \mathbb{E}_{\lambda(\epsilon_t)}\left[T^{\theta,\phi}_t(x_{t}(\phi; \epsilon_{t}, x_{t+1})) \frac{\partial x_{t}(\phi; \epsilon_{t}, x_{t+1})}{\partial \phi}\right. \\ 
+ \left. U^{\theta,\phi}_t(x_{t}(\phi; \epsilon_{t}, x_{t+1})) + \nabla_{\phi} r_{t+1}^{\theta,\phi}(x_{t}(\phi; \epsilon_{t}, x_{t+1}),x_{t+1})\right].  
\end{multline}
\end{proposition}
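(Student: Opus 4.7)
The plan is to mirror the proof of Proposition \ref{prop:ELBOgradphi}, but keep careful track of the fact that in the amortized setting $\phi$ is shared across all time steps, so $V^{\theta,\phi}_t$ itself depends on $\phi$ (not just the outer expectation). I would start from the recursion established in Proposition \ref{prop:ELBOrecursion}, namely $V^{\theta,\phi}_{t+1}(x_{t+1}) = \mathbb{E}_{q^{\phi}_{t+1}(x_t|x_{t+1})}[V^{\theta,\phi}_t(x_t) + r^{\theta,\phi}_{t+1}(x_t, x_{t+1})]$, and push the expectation through the reparameterization trick to rewrite this as $V^{\theta,\phi}_{t+1}(x_{t+1}) = \mathbb{E}_{\lambda(\epsilon_t)}[V^{\theta,\phi}_t(x_t(\phi;\epsilon_t,x_{t+1})) + r^{\theta,\phi}_{t+1}(x_t(\phi;\epsilon_t,x_{t+1}), x_{t+1})]$. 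The first identity of the proposition is then immediate from a second application of the reparameterization trick to $\mathcal{L}_t(\theta,\phi) = \mathbb{E}_{q_t^\phi(x_t|y^t)}[V^{\theta,\phi}_t(x_t)]$, combined with the interchange of $\nabla_\phi$ and $\mathbb{E}_{\lambda}$.

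Next I would derive the recursions for $T$ and $U$ separately by differentiating the reparameterized form of $V^{\theta,\phi}_{t+1}(x_{t+1})$ with respect to the two slots. For $T^{\theta,\phi}_{t+1}(x_{t+1}) := \partial V^{\theta,\phi}_{t+1}/\partial x_{t+1}$, I treat $x_{t+1}$ as a free variable with $\phi$ held fixed, so only the path $x_{t+1} \mapsto x_t(\phi;\epsilon_t,x_{t+1}) \mapsto V^{\theta,\phi}_t$ contributes through the argument of $V^{\theta,\phi}_t$, yielding (via the definition of $T^{\theta,\phi}_t$ and the chain rule) the first term of (\ref{eq:recursionTamortized}); the reward contributes the remaining $\nabla_{x_{t+1}} r^{\theta,\phi}_{t+1}$ term. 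Crucially, no $U$ term appears here, because we are not varying $\phi$. For $U^{\theta,\phi}_{t+1}(x_{t+1}) := \nabla_\phi V^{\theta,\phi}_{t+1}(x_{t+1})$, I fix $x_{t+1}$ and differentiate w.r.t. $\phi$: now $\phi$ enters $V^{\theta,\phi}_t(x_t(\phi;\epsilon_t,x_{t+1}))$ in two places, giving both a $T^{\theta,\phi}_t \cdot \partial x_t/\partial \phi$ term and an ``explicit'' $U^{\theta,\phi}_t$ term, and the reward contributes its own $\nabla_\phi r^{\theta,\phi}_{t+1}$ term, reproducing (\ref{eq:recursionUamortized}).

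Finally, to establish (\ref{eq:recursionQphiamortized}) for $\nabla_\phi V^{\theta,\phi}_{t+1}(x_{t+1}(\phi;\epsilon_{t+1}))$, I would take a total derivative with respect to $\phi$ along the reparameterized trajectory, substituting $x_{t+1} = x_{t+1}(\phi;\epsilon_{t+1})$ so that the inner $x_t(\phi;\epsilon_t,x_{t+1}(\phi;\epsilon_{t+1}))$ picks up $\phi$-dependence both directly and through $x_{t+1}(\phi;\epsilon_{t+1})$. Applying the chain rule and grouping the contributions using the definitions of $T^{\theta,\phi}_t$ and $U^{\theta,\phi}_t$ then yields the three terms displayed in (\ref{eq:recursionQphiamortized}), with $\mathrm{d} x_t/\mathrm{d}\phi$ understood as the total Jacobian that already encodes both channels.

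The main obstacle I anticipate is purely bookkeeping: distinguishing partial from total derivatives of $x_t(\phi;\epsilon_t,x_{t+1}(\phi;\epsilon_{t+1}))$, and, at each step, being explicit about whether $\nabla_\phi$ refers to the ``explicit'' gradient of $V^{\theta,\phi}_{\cdot}$ at fixed $x$'s (captured by $U$) or the total gradient along the reparameterized sample (captured by $T\cdot \mathrm{d} x/\mathrm{d}\phi + U + \nabla_\phi r$). Once this notation is nailed down, each identity reduces to a mechanical chain-rule expansion of the reparameterized Bellman recursion, and the interchange of differentiation and expectation is justified under the standing regularity assumptions announced just before Proposition \ref{prop:ELBOgradtheta}.
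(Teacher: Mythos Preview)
Your proposal is correct and follows essentially the same approach as the paper: both start from the reparameterized Bellman recursion of Proposition~\ref{prop:ELBOrecursion}, then obtain the $T$, $U$, and total-gradient identities by chain-rule differentiation while carefully distinguishing partial derivatives (with $x_{t+1}$ or $\phi$ held fixed) from total derivatives along the reparameterized sample. Your explicit attention to the two channels through which $\phi$ enters $V^{\theta,\phi}_t(x_t(\phi;\epsilon_t,x_{t+1}))$ matches exactly how the paper splits the contribution into the $T\cdot\partial x_t/\partial\phi$ and $U$ terms.
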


\begin{proof}

By a direct application of the reparameterization trick,  
\begin{equation}
\nabla_{\phi} \mathcal{L}_{t}(\theta,\phi)=\mathbb{E}_{\lambda(\epsilon_t)}[\nabla_{\phi} V^{\theta,\phi}_{t}(x_t(\phi; \epsilon_t))].
\end{equation}
By Proposition \ref{prop:ELBOrecursion}, 
\begin{equation}
    V^{\theta,\phi}_{t+1}(x_{t+1})=\,\mathbb{E}_{q_{t+1}^{\phi}(x_{t}|y^t, x_{t+1})}[V^{\theta,\phi}_{t}(x_{t})+r_{t+1}^{\theta,\phi}(x_{t},x_{t+1})].
\end{equation}
Hence, using the chain rule, 
\begin{multline}
\nabla_{\phi} V^{\theta,\phi}_{t+1}(x_{t+1}(\phi; \epsilon_{t+1}))  = \mathbb{E}_{\lambda(\epsilon_t)} \left[ \frac{\partial}{\partial x_t} V^{\theta,\phi}_{t}(x_t)\Bigr|_{x_t=x_{t}(\phi; \epsilon_{t}, x_{t+1}(\phi; \epsilon_{t+1}))} \frac{\textup{d} x_{t}(\phi; \epsilon_{t}, x_{t+1}(\phi; \epsilon_{t+1}))}{\textup{d} \phi} \right.\\
+ \left.\frac{\partial}{\partial \phi} V^{\theta,\phi}_{t}(x_t)\Bigr|_{x_t=x_{t}(\phi; \epsilon_{t}, x_{t+1}(\phi; \epsilon_{t+1}))} + \nabla_{\phi} r_{t+1}^{\theta,\phi}(x_{t}(\phi; \epsilon_{t}, x_{t+1}(\phi; \epsilon_{t+1})),x_{t+1}(\phi; \epsilon_{t+1}))\right]. 
\end{multline}
The functions $\frac{\partial}{\partial x_t} V^{\theta,\phi}_{t}(x_t), \frac{\partial}{\partial \phi} V^{\theta,\phi}_{t}(x_t)$ are defined as $T^{\theta,\phi}_t(x_t), U^{\theta,\phi}_t(x_t)$ respectively. 

$T^{\theta,\phi}_t(x_t)$ follows the same forward recursion as in \Cref{prop:ELBOgradphi}. For  $U^{\theta,\phi}_t(x_t)$, we have
\begin{align}
 U^{\theta,\phi}_{t+1}(x_{t+1}) & = \frac{\partial}{\partial \phi} V^{\theta,\phi}_{t+1}(x_{t+1}) \\
 &= \frac{\partial}{\partial \phi} \mathbb{E}_{q_{t+1}^{\phi}(x_{t}|y^t, x_{t+1})}[V^{\theta,\phi}_{t}(x_{t})+r_{t+1}^{\theta,\phi}(x_{t},x_{t+1})] \\
 &= \mathbb{E}_{\lambda(\epsilon_t)}\left[\nabla_\phi \left(V^{\theta,\phi}_{t}(x_{t}(\phi; \epsilon_{t}, x_{t+1}))+r_{t+1}^{\theta,\phi}(x_{t}(\phi; \epsilon_{t}, x_{t+1}),x_{t+1})\right)\right] \\
 & = \begin{multlined}[t] \mathbb{E}_{\lambda(\epsilon_t)}\left[\frac{\partial}{\partial x_t} V^{\theta,\phi}_{t}(x_t)\Bigr|_{x_t=x_{t}(\phi; \epsilon_{t}, x_{t+1})} \frac{\partial x_{t}(\phi; \epsilon_{t}, x_{t+1})}{\partial \phi} \right. \hspace{3.1cm}  \\ 
+ \left. \frac{\partial}{\partial \phi} V^{\theta,\phi}_{t}(x_t)\Bigr|_{x_t=x_{t}(\phi; \epsilon_{t}, x_{t+1})}  + \nabla_{\phi} r_{t+1}^{\theta,\phi}(x_{t}(\phi; \epsilon_{t}, x_{t+1}),x_{t+1})\right] \end{multlined} \\
 & = \begin{multlined}[t] \mathbb{E}_{\lambda(\epsilon_t)}\left[T^{\theta,\phi}_t(x_{t}(\phi; \epsilon_{t}, x_{t+1})) \frac{\partial x_{t}(\phi; \epsilon_{t}, x_{t+1})}{\partial \phi}  \right.  \hspace{4.4cm} \\ 
+ \left. U^{\theta,\phi}_t(x_{t}(\phi; \epsilon_{t}, x_{t+1}))+ \nabla_{\phi} r_{t+1}^{\theta,\phi}(x_{t}(\phi; \epsilon_{t}, x_{t+1}),x_{t+1})\right] . \end{multlined}
\end{align}

\end{proof}

Note that in the simplest case where we take $h_t$ as the filtering statistics and detach $h_t$ at each time step, these are reflected in the corresponding gradient computations and function updates. In this case, $\nabla_\phi \mathcal{L}_{t}(\theta,\phi) - \nabla_\phi  \mathcal{L}_{t-1}(\theta,\phi)$ reduces to 
\begin{multline}
\nabla_{\phi}\mathbb{E}_{q_{t}^{\phi}(x_{t}|y^t)q_{t}^{\phi}(x_{t-1}|y^{t-1}, x_{t})}\left[r_{t}^{\theta,\phi}(x_{t-1},x_{t})\right]\\
+ \mathbb{E}_{\lambda(\epsilon_{t-1})\lambda(\epsilon_{t})}\left[T_{t-1}^{\theta,\phi}\left(x_{t-1}(\phi;\epsilon_{t-1},x_{t}(\phi;\epsilon_{t}))\right)\frac{\textup{d}x_{t-1}(\phi;\epsilon_{t-1},x_{t}(\phi;\epsilon_{t}))}{\textup{d}\phi}\right]\\
+ \mathbb{E}_{q_{t}^{\phi}(x_{t}|y^t)q_{t}^{\phi}(x_{t-1}|y^{t-1}, x_{t})}\left[U_{t-1}^{\theta,\phi}\left(x_{t-1}\right)\right] -\mathbb{E}_{q_{t-1}^{\phi}(x_{t-1}|y^{t-1})}\left[U_{t-1}^{\theta,\phi}\left(x_{t-1}\right)\right]
\end{multline}
since $q_{t-1}^\phi(x_{t-1}|y^{t-1})$ has been detached. Note that the terms on the last line represent the difference between two expectations of the $U^{\theta,\phi}_{t-1}(x_{t-1})$ function. We refer to this approach as ``Ours (TU)''.  Without the final line, the objective reduces to the same objective as the non-amortized case, and we refer to this approach as ``Ours (T)''. We demonstrate the applicability of these methods on the Chaotic RNN task. For both the forward filtering and backward smoothing networks, we use a MLP with 2 hidden layers and 100 neurons in each layer. As shown in Table \ref{tab:crnn_rmse_amortized}, the amortized networks are able to achieve similar filtering accuracy as the non-amortized case. Both of the proposed methods achieve lower errors during training time and test time. 

\begin{table}
\centering
\caption{Root Mean Squared Error between filtering mean and true state on the CRNN task using amortized models. For test time errors, we rerun the trained models from the start of data without further optimization. The resulting errors are comparable to those of the non-amortized models. }
\label{tab:crnn_rmse_amortized}
\vspace{1em}

\begin{tabular}{cccccc}
\toprule 
$d_{x}$ &  & AELBO-1 & AELBO-2 & Ours (T) & Ours (TU)\tabularnewline
\midrule
\midrule 
\multirow{2}{*}{5} & Filter RMSE (train time) & 0.1158\textpm 0.0011 & 0.1039\textpm 0.0006 & \textbf{0.1032\textpm 0.0004} & \textbf{0.1031\textpm 0.0002}\tabularnewline
\cmidrule{2-6} \cmidrule{3-6} \cmidrule{4-6} \cmidrule{5-6} \cmidrule{6-6} 
 & Filter RMSE (test time) & 0.1170\textpm 0.0022 & 0.1064\textpm 0.0012 & \textbf{0.1048\textpm 0.0005} & \textbf{0.1056\textpm 0.0013}\tabularnewline
\bottomrule
\end{tabular}
\end{table}

\subsection{Semi-Amortized Approach} \label{sec:ApdxSemiAmortized}
An alternative approach to amortization is to return to the exact same gradient computations as in the main paper. However, instead of $\phi_t$ corresponding directly to the statistics of $q_t^{\phi_t}$, it corresponds to the parameters of the RNN and MLPs which produce $q_t^{\phi_t}$ statistics from the observations $y^t$. For each time step, we run the inner optimization and `overfit' the RNN/MLPs to the current set of observations. Overfitting in this context means that the networks are optimized to produce as accurate as possible $q_t^{\phi_t}$ statistics for this time step (just as in the main paper), but since only the current time step is considered in the optimization, they are not forced to generalize to other time steps and $y$-observations.\\

This may seem contradictory with the aims of amortization, namely using computational work spent during previous time steps to reduce the inference load at the current time step. However, if the RNN/MLPs are initialized at their optimized values from the previous time step, the previous optimization cycles can be thought of as a type of pre-training. This makes the inner optimization problem progressively easier in terms of computation required for a certain level of accuracy.\\

We demonstrate this idea using the linear Gaussian application where the distance to the true filtering distributions can be calculated analytically. We use an RNN and MLPs to generate $q_t^{\phi_t}$ statistics as described above and optimize each $\phi_t$ for 100 steps at each time step. Figure \ref{fig:amortized_zeroshot} plots the absolute error in the mean of $q_t^{\phi_t}(x_t)$ (averaged over dimension) versus time step for 5 different points within each inner optimization routine. Looking at the zero shot performance (at the start of each time step, before any optimization) we see that over time, the amortization networks are able to produce more and more accurate statistics without any updates using the current observations. This shows that this naive approach to amortization can indeed provide useful cost savings in the long-run.\\

Using this approach, we were able to reproduce the model learning results shown in Figure \ref{fig:linear_model_learning} but with fewer iterations per time step. This was also achieved for the Sequential VAE example using a convolutional RNN to encode video frames and MLPs to generate variational statistics. A visually plausible transition function could be successfully learned in this semi-amortized fashion. 

\begin{figure}
    \centering
    \includegraphics[width=8cm]{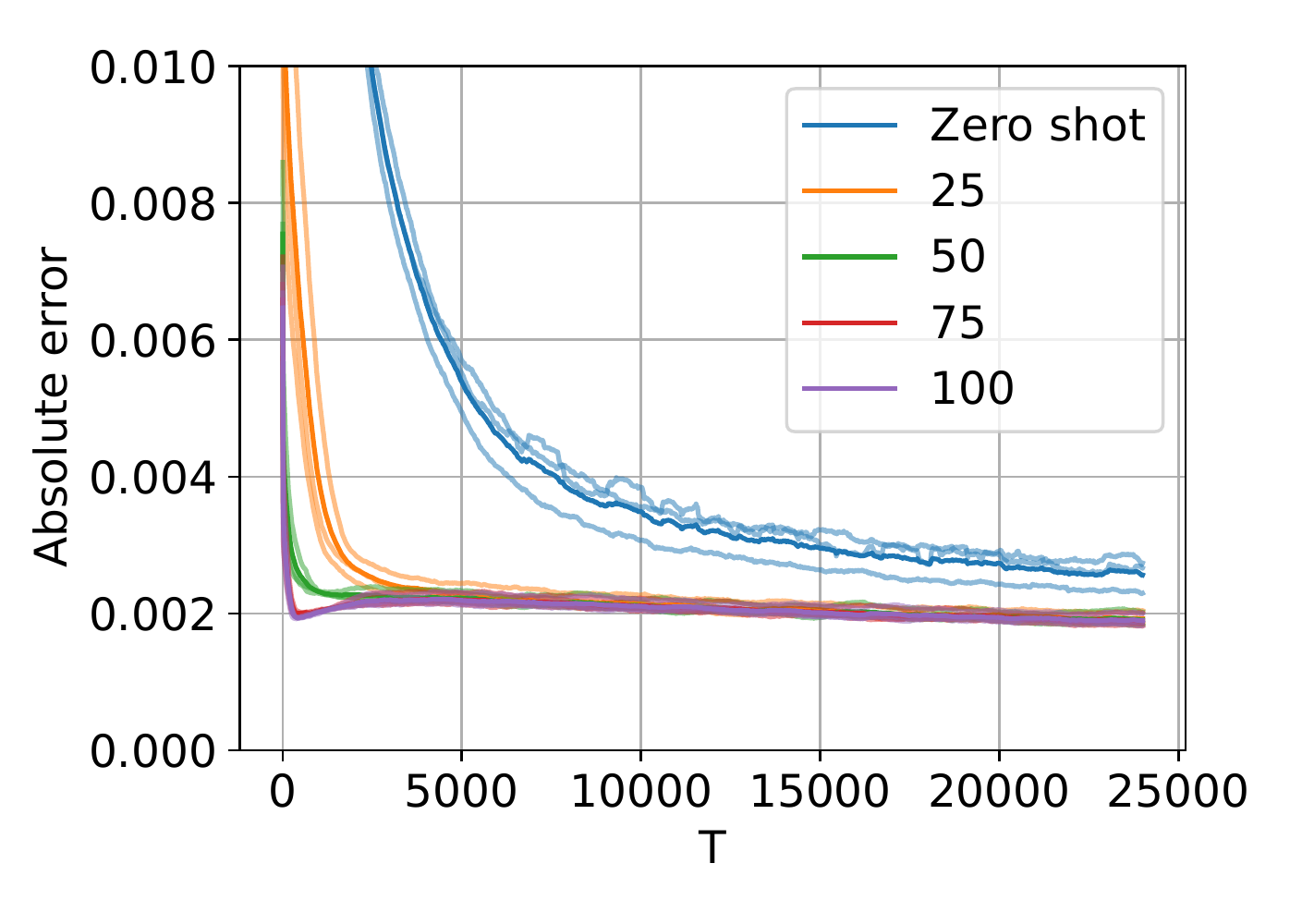}
    \caption{Absolute error of the mean of $q_t^{\phi_t}(x_t)$ averaged over dimension versus time step. 5 different points in each inner optimization process are plotted, zero shot is before any optimization steps, 25 is after 25 optimization steps, and so on. Different seeds are shown as translucent with the mean over seed shown in full color. The plateau in absolute error is due to the inherent limitations of stochastic gradient descent with a fixed learning rate. To contextualize the absolute error value, states and state transitions are on the order of $\sim 0.1$. The lines are smoothed using a uniform kernel of width 1000.}
    \label{fig:amortized_zeroshot}
\end{figure}

\subsection{Gradient Computations} \label{sec:ApdxAmortizationGradients}
All approaches to amortization require computing gradients of the form $\frac{\textup{d} x_t}{\textup{d} \phi}$ with $x_t$ being sampled using the reparameterization trick with statistics from $q_t^{\phi}(x_t)$. When we use an RNN to calculate these statistics, calculating this derivative requires backpropagating through all previous observations $y^t$. This results in a linearly increasing computational cost in time. To avoid this, we detach the RNN state $h_{t-H}$ from the computational graph at some fixed window into the past, $H$. When we roll out the RNN to calculate statistics from time $t$, we simply initialize at $h_{t-H}$ and treat it as a constant. When the algorithm proceeds to the next time step, $h_{t+1-H}$ is then kept constant at its most recent value during the previous time step. More sophisticated methods for online training of RNNs are also possible, we refer to \cite{marschall2020unified} for a survey.

\section{Broader Impact} \label{sec:BroaderImpact}
We propose a generic methodology for performing online variational inference and parameter estimation. Historically, filtering has been used in a huge variety of applications, some with large societal impacts. The same filtering algorithms can both help predict future weather patterns but could also be used in weapons guidance systems. Our current methodology remains in the research stage but as further developments are made that make it more practically applicable, it is important to fully consider these possible societal effects.
\end{document}